\documentclass{article}

\usepackage{PRIMEarxiv}

\usepackage{algorithm}
\usepackage{algorithmic}

\usepackage[numbers]{natbib}
\usepackage{microtype}
\usepackage{graphicx}
\usepackage{booktabs} %
\usepackage{hyperref}
\hypersetup{
    colorlinks=true,
    linkcolor=blue,
    citecolor=blue,
    filecolor=magenta,
    urlcolor=cyan,
    pdftitle={Overleaf Example},
    pdfpagemode=FullScreen,
}

\usepackage{tikz}
\usetikzlibrary{bayesnet}
\usetikzlibrary{arrows}
\usetikzlibrary{positioning}
\usepackage{subcaption}
\usepackage{amsmath}
\usepackage{amssymb}
\usepackage{mathtools}
\usepackage{amsthm}
\usepackage{makecell}
\usepackage{amsfonts}
\usepackage{bm}
\usepackage{thmtools} 
\usepackage{thm-restate}
\usepackage{enumitem}
\usepackage{authblk}
\usepackage{longtable}
\usepackage{wrapfig}
\usepackage{multirow}
\usepackage{adjustbox}
\usepackage{colortbl}
\newcolumntype{H}{>{\setbox0=\hbox\bgroup}c<{\egroup}@{}}
\usepackage{makecell}
\usepackage{array}
\usepackage{listings}

\usepackage[capitalize,noabbrev]{cleveref}

\theoremstyle{plain}
\newtheorem{theorem}{Theorem}[section]
\newtheorem{proposition}[theorem]{Proposition}

\newtheorem{corollary}[theorem]{Corollary}

\theoremstyle{definition}

\theoremstyle{remark}

\usepackage[textsize=tiny]{todonotes}


\title{Evaluating Stochasticity in Deep Research Agents
}

\usepackage{authblk}

\author[1]{Haotian Zhai}
\author[1]{Elias Stengel-Eskin}
\author[1]{Pratik Patil}
\author[1]{Liu Leqi}

\affil[1]{%
University of Texas at Austin\vspace{-15pt}\thanks{
Corresponding author: {\texttt {haotian.zhai@utexas.edu}, \texttt {leqiliu@utexas.edu}}
}}

\DeclareMathOperator{\Var}{{\rm Var}}
\newcommand{\tr}{\mathop{\mathrm{Tr}}}
\newcommand{\TV}{\mathop{\mathrm{TV}}}

\renewcommand{\tilde}{\widetilde}

\begin{document}
\maketitle

\begin{abstract}

Deep Research Agents (DRAs) are promising agentic systems that gather and synthesize information to support research across domains such as financial decision-making, medical analysis, and scientific discovery. 
Despite recent improvements in research quality (e.g., outcome accuracy when ground truth is available), DRA system design often overlooks a critical barrier to real-world deployment: \textit{stochasticity}. 
Under identical queries, repeated executions of DRAs can exhibit substantial variability in terms of research outcome, findings, and citations. 
In this paper, we formalize the study of stochasticity in DRAs by modeling them as information acquisition Markov Decision Processes. 
We introduce an evaluation framework that quantifies variance in the system and identify three sources of it: information acquisition, information compression, and inference.
Through controlled experiments, we investigate how stochasticity from these modules across different decision steps influences the variance of DRA outputs.
Our results show that reducing stochasticity can improve research output quality, with inference and early-stage stochasticity contributing the most to DRA output variance. 
Based on these findings, we propose strategies for mitigating stochasticity while maintaining output quality via structured output and ensemble-based query generation. 
Our experiments on \textit{DeepSearchQA} show that our proposed mitigation methods reduce average stochasticity by 22\% while maintaining high research quality.

\end{abstract}


\section{Introduction}\label{sec:intro}

Deep Research Agents (DRAs) are a specialized class of LLM-driven agents designed for autonomous knowledge discovery~\cite{huang2025deep}. 
By interleaving iterative tool execution with adaptive reasoning and planning~\cite{yao2022react,team2025tongyi}, these agents systematically retrieve and synthesize external data to fulfill complex, information-heavy research objectives~\cite{du2025deepresearch,gupta2025deepsearchqa,wei2025browsecomp}.

However, the deployment of DRAs is hindered by a fundamental lack of reliability.
The execution of a DRA involves an iterative cycle of search, reasoning, and synthesis, where minor generation or search variations can cascade into vastly different research conclusions. 
This stochasticity, manifesting as significant variability in research output, findings and citations across repeated executions of the same query, remains largely underexplored. 

The implications of this stochasticity span from general user experience to high-stakes professional applications, which depend largely on whether the user possesses the domain expertise to audit the system's outputs.
For lay users, DRAs are often used to inform day-to-day decisions -- for example, medical information seeking. Lacking the expertise to verify the underlying evidence, these users often treat the DRA outputs as the ground truth. If a DRA provides conflicting advice on the safety of a medication across three different runs, the resulting stochasticity signals a lack of reliability, potentially leading to users following flawed advice.
For professional users, the challenge is operational. In fields like pharmaceutical R\&D or quantitative finance, DRAs are employed to automate high-volume workflows, such as synthesizing literature on protein folding or conducting systematic risk analyses. While these experts can differentiate between rigorous and unconvincing outputs, if a system produces a different risk profile for the same financial instrument each time it is queried, the expert must manually verify every execution. In this context, stochasticity transforms a tool meant for efficiency into a liability, as the time saved by automation is reclaimed by the necessity of rigorous validation.

To quantify and analyze the stochasticity in DRAs, we first formalize the execution of a deep research agent as an information acquisition Markov Decision Process (MDP), in which the agent iteratively performs query generation (information acquisition), summarization (information compression), and reasoning (inference) to uncover a set of task-relevant atomic findings and citations to source documents. 
This formulation provides a unifying abstraction for analyzing DRA's behavior across heterogeneous tasks and instantiations. 

Building on this formulation, we introduce a suite of metrics that characterize DRA stochasticity at multiple levels, including variability in final research outputs (e.g., answers), findings, and citation sources. 
These metrics enable fine-grained measurements that allow us to disentangle whether two runs disagree because they retrieve and cite different evidence, reason differently over the same evidence, or both.

We then adopt a measurement framework grounded in variance decomposition that attributes stochasticity in DRA’s evolving knowledge state to two sources: (i) \textit{propagated stochasticity} inherited from earlier states, and (ii) \textit{intrinsic stochasticity} arising from the current decision modules. 
The \textit{intrinsic stochasticity} term further decomposes into stochasticity from the information acquisition policy, the information compression policy, and the inference policy. 
This decomposition provides a conceptual lens for investigating where stochasticity enters the DRA system and how it compounds over time.
We empirically validate this framework using temperature ablations that selectively increase sampling temperature for individual modules at specific time steps while keeping all other modules deterministic.
This controlled analysis allows us to understand how stochasticity propagates in DRAs and why early-stage stochasticity dominates final output variance.

Finally, guided by findings from temperature ablations, we study practical mitigation strategies that reduce stochasticity while preserving research output quality. By combining structured summarization and reasoning output with ensembling early-stage search queries, we demonstrate that average stochasticity can be successfully reduced by 22\%,
while the research output quality is maintained.

The key contributions of this work are as follows: 

\begin{itemize}[leftmargin=7mm,itemsep=2pt]
    \item We identify the stochasticity problem and propose an information acquisition MDP formulation for deep research agents that supports principled analysis of stochasticity (Section~\ref{sec: modeling}).
    \item We introduce multi-level stochasticity metrics over answers, findings, and citations for evaluating stochasticity across executions (Section~\ref{sec:metrics}).
    \item We conceptually decompose DRA stochasticity into propagated and intrinsic stochasticity across agent modules (Section~\ref{sec:stochasticity_decomposition}), and introduce temperature ablation experiments to empirically investigate how these naturally entangled sources of uncertainty propagate and affect the final output stochasticity (Section~\ref{sec:temp_abl}).
    \item We propose mitigation strategies that substantially reduce stochasticity without degrading answer accuracy (Section~\ref{sec: miti}).
\end{itemize}

\section{Related Work}

\textbf{Deep research and browsing agents.}
The evolution of Deep Research Agents (DRAs) stems from early Retrieval-Augmented Generation (RAG) systems, which utilized large corpora to enhance factual accuracy in knowledge-intensive question answering \cite{lewis2020retrieval, gao2023retrieval}. 
While initial efforts focused on short-form factoid synthesis \cite{jin2025search, song2025r1, zheng2025deepresearcher,li2025search}, these methods often lack the reasoning depth required for comprehensive reports \cite{google2025gemini,openai2025deepresearch}. 
Frameworks such as GPTResearcher \cite{elovic2023gptr} have bridged this gap by integrating agentic workflows for long-form generation, utilizing query decomposition and hierarchical planning to ensure completeness. 
Contemporary systems such as OpenDeepSearch \cite{alzubi2025open} and Tongyi DeepResearch \cite{team2025tongyi} further extend these capabilities by interleaving iterative action-observation cycles with external tool use, such as Python execution. 
Other work such as FlashResearch \cite{nie2025flashresearch} also focuses on real-time agent orchestration, aiming at improve the efficiency of deep research agents. 
Collectively, these advancements transition AI from passive retrievers to autonomous agents capable of navigating complex, multi-stage research trajectories.

\textbf{Benchmarks and evaluation for DRAs.}
Early evaluations of research agents primarily relied on traditional Question Answering (QA) benchmarks \cite{wei2025browsecomp,gupta2025deepsearchqa,wu2025webwalker} to assess performance based on direct retrieval accuracy. 
While effective for validating answer precision, these benchmarks often fail to capture the complexity of synthesizing comprehensive research reports. 
To address the demands of open-ended, long-horizon research, recent works have introduced specialized benchmarks \cite{du2025deepresearch,xu2025researcherbench,yao2025rigorous,wang2025liveresearchbench} to evaluate deep research agents. 
These benchmarks move beyond simple accuracy, employing metrics based on content quality and factual reliability by decomposing final reports into atomic findings \cite{min2023factscore} and utilizing LLM-as-a-judge \cite{gu2024survey} to automate the assessment of the final report. 
Related work also develops specialized diagnostics for retrieval faithfulness and claim verification in DRA settings \cite{coelho2025deepresearchgym}. \citet{mustahsan2025stochasticity} studies stochasticity of evaluation for agentic systems, proposing statistical reliable measures.  

Although these methodologies have significantly advanced our ability to measure content quality and factual reliability, none of them quantify stochasticity in DRAs across runs. 
Our work bridges this gap by introducing a systematic framework to evaluate the stochasticity, attributing stochasticity to different components, and proposing stochasticity mitigation methods in DRAs.

\section{Modeling Deep Research Agents}
\label{sec: modeling}

We model a single execution run of a deep research agent as an \emph{information acquisition MDP}.
Let $\mathcal{Q}$ denote the set of all problem instances (user queries). 
For a fixed instance $q \in \mathcal{Q}$, let $\mathcal{F}(q)=\{f_1,\dots,f_N\}$ denote a finite universe of \emph{candidate atomic findings} relevant to $q$.

At step $t$, we track an abstract knowledge state $\mathbf{b}_t \in \{0,1\}^N$, where $\mathbf{b}_t[k]=1$ indicates that finding $f_k$ is supported by the evidence acquired so far.
We initialize $\mathbf{b}_0=\mathbf{0}$.
At each step, the agent either issues an information-seeking query or terminates.
Let the action space be $\mathcal{A}=\mathcal{S}\cup\{\mathtt{STOP}\}$, where $\mathcal{S}$ is the set of allowable search queries.
The agent samples an action
\begin{equation}
a_t \sim \pi_{\mathrm{query}}(\cdot \mid q, \mathbf{b}_t),
\end{equation}
where $\pi_{\mathrm{query}}$ is the \emph{query policy}.
If $a_t=\mathtt{STOP}$ the run terminates.
Otherwise, the environment (e.g., a search engine) returns retrieved content (documents, snippets, metadata, etc.), modeled as an observation
\begin{equation}
i_t \sim \beta_{\mathrm{env}}(\cdot \mid a_t),
\end{equation}
where $\beta_{\mathrm{env}}$ captures exogenous variability (e.g., dynamic content, API nondeterminism, etc.).

Retrieved content is typically too large and heterogeneous to store directly, so the agent compresses it into an intermediate representation (notes, extracted claims, structured tables, etc.).
We model this as
\begin{equation}
h_t \sim \pi_{\mathrm{sum}}(\cdot \mid q, \mathbf{b}_t, i_t),
\end{equation}
where $\pi_{\mathrm{sum}}$ is an \emph{information compression} policy.
Finally, the agent integrates the new compressed information to update its knowledge state via an \emph{inference} policy:
\begin{equation}
\mathbf{b}_{t+1} \sim \pi_{\mathrm{update}}(\cdot \mid q, \mathbf{b}_t, h_t, a_t).
\end{equation}
A trajectory of length $T$ is therefore
\[
\tau = (\mathbf{b}_0, a_0, i_0, h_0, \mathbf{b}_1, \dots, \mathbf{b}_T),
\]
terminating either by $\mathtt{STOP}$ or a maximum horizon.

This factorization isolates three key decision modules that are typically under the agent designer's control:
\begin{itemize}[leftmargin=7mm,nosep,itemsep=2pt]
    \item[(i)] \emph{Information acquisition} ($\pi_{\mathrm{query}}$): 
    whether or what to retrieve (query generating and stopping).
    \item[(ii)] \emph{Information compression} ($\pi_{\mathrm{sum}}$): 
    how retrieved content is distilled into a usable memory representation.
    \item[(iii)] \emph{Inference} ($\pi_{\mathrm{update}}$):
    how the agent integrates accumulated evidence to update supported findings.
\end{itemize}

The environment kernel $\beta_{\mathrm{env}}$ may also be stochastic (e.g., dynamic web search).
Our analysis framework assumes $\beta_{\mathrm{env}}$ is held fixed (e.g., via cached or reproducible search).

We can instantiate different DRA frameworks based on our modeling. In this work, we adopt ReAct~\cite{yao2022react, team2025tongyi} as a representative instantiation. 
In ReAct setup, the agent operates in a single, flat loop where the agent performs information acquisition, information compression and inference simultaneously. After ``reasoning'' about the current belief state (inference), the agent would ``act'' to generate search queries (information acquisition), and then uses another module to summarize the retrieved content (information compression) before ``reasoning'' of the next round begins. 
We also provide another instantiation example in Appendix \ref{sec:instantiation2}.

\begin{figure*}[!t]
    \centering
    \includegraphics[width=\linewidth]{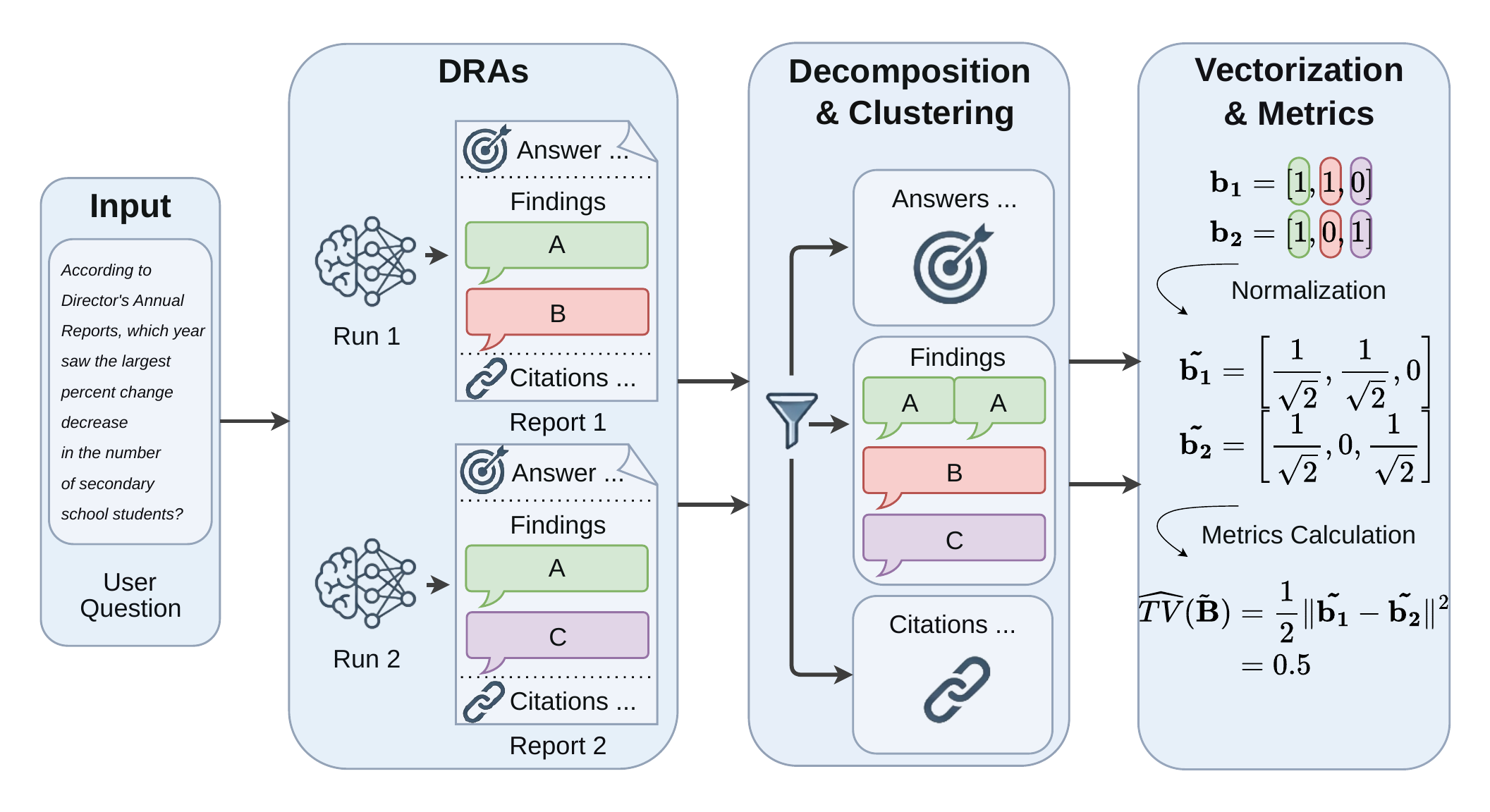}
    \caption{\textbf{Overview of the evaluation pipeline.} The process begins with a user question which triggers multiple independent Deep Research Agent (DRA) runs (in this example we use number of runs $k = 2$). The resulting reports are decomposed into answers, findings, and citations, and then clustered. After clustering, each report's answers, findings, and citations are mapped to binary vectors and normalized to compute the Total Variance (TV) as a measure of stochasticity. We expand findings as an example to illustrate the whole process. In reality, answers and citations are also processed in similar ways as in ~\cref{sec:metrics}.}
    \label{fig:pipeline}
\end{figure*}

\section{Evaluation Framework for DRA Stochasticity}
\label{sec:metrics}

We quantify stochasticity by running the same agent $n$ times on the same instance $q$ under identical configuation, producing trajectories $\mathcal{T}=\{\tau^{(1)},\dots,\tau^{(n)}\}$ and final reports $\{r^{(1)},\dots,r^{(n)}\}$.
For each report, we extract three objects of interest:
(i) an answer (when a verifiable answer exists),
(ii) a set of atomic findings, and
(iii) a set of citations (URLs).
We then cannonicalize these objects across runs and map each run to a vector representation that enables a unified variance-based metric. 

We have three random vectors of interest: the answer $\mathbf{Y}$, the citations $\mathbf{C}$, and the findings $\mathbf{B}$.
They are all categorical distributions 
and can be represented in similar ways. Note that for open-ended questions, we lack answers $\mathbf{Y}$.
Thus, in the following, we model the agent's output as a realization of a random vector $\mathbf{X} \in \{0, 1\}^d$. While the dimension $d$ and the semantic interpretation vary, the mathematical treatment of stochasticity remains unified. We describe the three specific instantiations of $\mathbf{X}$ as follows:

\begin{enumerate}[leftmargin=5mm,itemsep=2pt]
    \item \emph{Answers ($\mathbf{Y}$):} The answer per run is a categorical random variable modeled as a one-hot vector $\mathbf{y} \in \{0, 1\}^{K_A}$. Here, $K_A$ represents the total number of unique answers observed across all runs. The support of $\mathbf{Y}$ is the set of standard basis vectors $\{\mathbf{e}_1, \dots, \mathbf{e}_{K_A}\}$. Note that answers exist only for questions that are not open-ended. For open-ended questions, this can be replaced by some measurement on the quality of report.
    \item \emph{Findings ($\mathbf{B}$):} The findings per run are modeled as a binary vector $\mathbf{b} \in \{0, 1\}^{K_F}$. Here, $K_F$ is the size of the global union of unique findings across runs. The global set of findings is indexed from $1$ to $K_F$, such that a value of $1$ at the $k$-th entry of $\mathbf{b}$ indicates the presence of the $k$-th finding in that run.
    
    \item \emph{Citations ($\mathbf{C}$):} The citations per run are modeled as a binary vector $\mathbf{c} \in \{0, 1\}^{K_C}$. Here, $K_C$ is the size of the global union of unique citations across runs. The global set of citations is indexed from $1$ to $K_C$, such that a value of $1$ at the $k$-th entry of $\mathbf{c}$ indicates the presence of the $k$-th citation in that run.
\end{enumerate}

\subsection{Total Variance as a Measure of Stochasticity}

We seek a scalar that captures run-to-run variability.
We use the trace of the covariance (``total variance'') of the embedded output vector.
Let us recall a convenient identity.

\begin{proposition}
\label{theorem:the_1}
    Let $\mathbf{X}$ be a random vector in $\mathbb{R}^d$ with finite mean $\boldsymbol{\mu} = \mathbb{E}[\mathbf{X}]$ and covariance matrix $\boldsymbol{\Sigma} = \Var(\mathbf{X})$. 
    Let $\mathbf{X}_1$ and $\mathbf{X}_2$ be independent and identically distributed (i.i.d.) copies of $\mathbf{X}$.
    The covariance matrix $\boldsymbol{\Sigma}$ is given by:$$\boldsymbol{\Sigma} = \frac{1}{2} \mathbb{E} \left[ (\mathbf{X}_1 - \mathbf{X}_2)(\mathbf{X}_1 - \mathbf{X}_2)^\top \right].$$
\end{proposition}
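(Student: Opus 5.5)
The plan is to center both copies at the common mean and expand the outer product. Write $\mathbf{X}_1 - \mathbf{X}_2 = (\mathbf{X}_1 - \boldsymbol{\mu}) - (\mathbf{X}_2 - \boldsymbol{\mu})$. Set $\mathbf{U} = \mathbf{X}_1 - \boldsymbol{\mu}$ and $\mathbf{V} = \mathbf{X}_2 - \boldsymbol{\mu}$, so that
\[
(\mathbf{X}_1 - \mathbf{X}_2)(\mathbf{X}_1 - \mathbf{X}_2)^\top = \mathbf{U}\mathbf{U}^\top - \mathbf{U}\mathbf{V}^\top - \mathbf{V}\mathbf{U}^\top + \mathbf{V}\mathbf{V}^\top .
\]
I will then take expectations term by term, using linearity of expectation entrywise on matrices.

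For the diagonal terms, identical distribution gives $\mathbb{E}[\mathbf{U}\mathbf{U}^\top] = \mathbb{E}[\mathbf{V}\mathbf{V}^\top] = \boldsymbol{\Sigma}$ by the definition of the covariance matrix. For the cross terms, independence of $\mathbf{X}_1$ and $\mathbf{X}_2$ lets me factor entrywise: $\mathbb{E}[U_i V_j] = \mathbb{E}[U_i]\,\mathbb{E}[V_j] = 0$. Hence $\mathbb{E}[\mathbf{U}\mathbf{V}^\top] = \mathbb{E}[\mathbf{V}\mathbf{U}^\top] = \mathbf{0}$. Summing gives $\mathbb{E}[(\mathbf{X}_1 - \mathbf{X}_2)(\mathbf{X}_1 - \mathbf{X}_2)^\top] = 2\boldsymbol{\Sigma}$, and dividing by $2$ yields the claim.

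The only real obstacle is integrability. Splitting the expectation into four pieces and factoring the cross terms requires each piece to be finite. I read the hypothesis that $\boldsymbol{\Sigma}$ exists as saying $\mathbb{E}\|\mathbf{X}\|^2 < \infty$. Then the cross terms are integrable by Cauchy--Schwarz, $\mathbb{E}|U_iV_j| \le (\mathbb{E}U_i^2\,\mathbb{E}V_j^2)^{1/2}$, and the factorization under independence is justified. In the paper's application $\mathbf{X} \in \{0,1\}^d$ is bounded, so this condition holds trivially.

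As a remark, taking traces gives $\tr(\boldsymbol{\Sigma}) = \tfrac12 \mathbb{E}\|\mathbf{X}_1 - \mathbf{X}_2\|^2$. This is the form presumably used afterwards to define total variance via pairwise distances between runs.
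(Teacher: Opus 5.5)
Your proof is correct and follows essentially the same route as the paper: expand the outer product of $\mathbf{X}_1-\mathbf{X}_2$, use identical distribution for the two square terms, and use independence to factor the cross terms. The only differences are cosmetic. You center first, so the cross terms vanish instead of each contributing $\boldsymbol{\mu}\boldsymbol{\mu}^\top$ to be subtracted from $\mathbb{E}[\mathbf{X}\mathbf{X}^\top]$, and you explicitly justify integrability of the cross terms, which the paper leaves implicit.
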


\begin{corollary}[Total variance]
    We have
    \begin{equation}
    \label{eq:trvar_pairwise}
    \TV(\mathbf{X}) \coloneq \tr(\boldsymbol{\Sigma}) = \frac{1}{2} \mathbb{E} \left[ \|\mathbf{X}_1 - \mathbf{X}_2\|^2 \right].
    \end{equation}
\end{corollary}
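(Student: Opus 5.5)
The corollary follows by taking the trace of both sides of the identity in Proposition~\ref{theorem:the_1}. Trace is linear, and a finite linear combination of entries commutes with expectation, so
\[
\tr(\boldsymbol{\Sigma}) = \frac{1}{2}\,\tr\Big(\mathbb{E}\big[(\mathbf{X}_1-\mathbf{X}_2)(\mathbf{X}_1-\mathbf{X}_2)^\top\big]\Big) = \frac{1}{2}\,\mathbb{E}\Big[\tr\big((\mathbf{X}_1-\mathbf{X}_2)(\mathbf{X}_1-\mathbf{X}_2)^\top\big)\Big].
\]
For any vector $\mathbf{v}\in\mathbb{R}^d$ we have $\tr(\mathbf{v}\mathbf{v}^\top)=\mathbf{v}^\top\mathbf{v}=\|\mathbf{v}\|^2$, since the diagonal entries of $\mathbf{v}\mathbf{v}^\top$ are $v_k^2$. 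Applying this with $\mathbf{v}=\mathbf{X}_1-\mathbf{X}_2$ gives \eqref{eq:trvar_pairwise}.

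So that the argument is self-contained, I would also verify Proposition~\ref{theorem:the_1}. Write $\mathbf{X}_j-\boldsymbol{\mu}=\mathbf{Z}_j$, so that $\mathbf{X}_1-\mathbf{X}_2=\mathbf{Z}_1-\mathbf{Z}_2$. Expanding the outer product gives
\[
\mathbb{E}[\mathbf{Z}_1\mathbf{Z}_1^\top]+\mathbb{E}[\mathbf{Z}_2\mathbf{Z}_2^\top]-\mathbb{E}[\mathbf{Z}_1\mathbf{Z}_2^\top]-\mathbb{E}[\mathbf{Z}_2\mathbf{Z}_1^\top].
\]
The first two terms each equal $\boldsymbol{\Sigma}$ because the copies are identically distributed. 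The cross terms factor by independence as $\mathbb{E}[\mathbf{Z}_1]\,\mathbb{E}[\mathbf{Z}_2]^\top=\mathbf{0}$. The sum is therefore $2\boldsymbol{\Sigma}$, and halving it gives the claim.

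The only point that needs care is integrability. The statement assumes a finite covariance, so every entry of the outer products has finite expectation, and exchanging $\mathbb{E}$ with the trace (a finite sum) is justified. In the paper's setting $\mathbf{X}\in\{0,1\}^d$ is bounded, so this holds trivially. Beyond that I expect no real obstacle: the corollary is a one-line consequence of linearity of trace together with the identity $\tr(\mathbf{v}\mathbf{v}^\top)=\|\mathbf{v}\|^2$.
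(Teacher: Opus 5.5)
Your argument is correct and is essentially the paper's: the corollary is the trace of Proposition~\ref{theorem:the_1} (a step the paper leaves implicit), combined with $\tr(\mathbf{v}\mathbf{v}^\top)=\|\mathbf{v}\|^2$. Your re-verification of the proposition differs only cosmetically from the appendix proof: you center first so the cross terms vanish, whereas the paper expands uncentered second moments to obtain $2\mathbb{E}[\mathbf{X}\mathbf{X}^\top]-2\boldsymbol{\mu}\boldsymbol{\mu}^\top$.
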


The raw trace variance scales with vector magnitude and output size.
To make metrics comparable across runs and output types, we normalize each realization by its $\ell_2$ norm.
Let $\tilde{\mathbf{x}}_i = {\mathbf{x}_i}/{\|\mathbf{x}_i\|_2}$ be the normalized observation for run $i$ if $\|\mathbf{x}_i\|_2 > 0$ and $\mathbf{0}$ otherwise.
We then define (normalized) stochasticity as $\TV(\tilde{\mathbf{X}}) \coloneq \tr(\Var(\tilde{\mathbf{X}}))\in[0,1]$.

Given $n$ runs with normalized vectors $\{\tilde{\mathbf{x}}^{(i)}\}_{i=1}^n$, we estimate:

\begin{equation}
    \widehat{\TV}(\tilde{\mathbf{X}}) = \frac{1}{2 n(n-1)} \sum_{i = 1}^n \sum_{j=1}^n \|\tilde{\mathbf{x}}_i - \tilde{\mathbf{x}}_j\|^2.
    \label{eq:tv_estimator}
\end{equation}

This is a U-statistic estimator corresponding to \eqref{eq:trvar_pairwise} which is unbiased.
Before applying the estimator $\widehat{\TV}$ to our variables of interest, we must ensure that the vector dimensions are semantically consistent across the $n$ independent runs, i.e., dimension 3 for different $\tilde{\mathbf{x}}_i$ should represent the same answer/finding/citation.

For one-hot answers, $\widehat{\TV}(\tilde{\mathbf{X}})$ equals the empirical probability that two independent runs yield different canonical answers.
For binary findings/citations, normalization yields a cosine-overlap notion: the metric increases when runs share fewer canonical items relative to their sizes. We provide further interpretation and analysis of $\widehat{\TV}$ in Appendix \ref{app: eval}.

To capture the variability in \emph{how much} the agent outputs (e.g., number of findings or citations), we can also measure variance of the support size $\|\mathbf{X}\|_0$, using the same pairwise form as \eqref{eq:trvar_pairwise}:
\begin{equation}
    \hspace{-0.5em}
    \widehat{\TV}(\|\mathbf{X}\|_0) = \frac{1}{2 n(n-1)} \sum_{i = 1}^n \sum_{j=1}^n \left( \|\mathbf{x}_i\|_0 - \|\mathbf{x}_j\|_0 \right)^2.
\end{equation}

\subsection{Constructing Metrics from Agent Outputs}

To implement the proposed variance estimators on real agent outputs, we need to extract the required information from the agent's final report. 
We take the agent's final report, and adopt a decomposition procedure inspired by FactScore~\cite{min2023factscore}. We extract: (i) the final answer (for non–open-ended tasks), (ii) all URLs as citations, and (iii) a set of atomic findings obtained by decomposing the final report into minimal factual claims using LLM for each report. 
We include our prompts in Appendix \ref{app:prompts}. 

To ensure cross-run semantic alignment, findings from all trajectories are clustered into canonical findings using LLM-as-a-judge that determines whether two findings express the same underlying fact. Citations are canonicalized via URL normalization followed by exact string matching. After that, analogous binary vectors are constructed for answers and citations using their respective canonical sets. For non open-ended tasks, accuracy is computed by comparing the extracted final answer to the ground-truth answer using LLM-as-a-judge.  
We use greedy decoding for the reproducibility of evaluation pipeline. We demonstrate our evaluation pipeline in \Cref{fig:pipeline}.

\section{Analysis of Stochasticity via Variance Decomposition}

Following the metrics and evaluation pipeline we proposed in Section~\ref{sec:metrics},  
we model the progression of the random variable of interest over time, denoted as $\mathbf{X}_t$ (e.g., the accumulated findings or citations at step $t$). Since TV corresponds to the trace of the covariance matrix, it allows us to apply the Law of Total Variance to mathematically decompose the variance of $\mathbf{X}_{t+1}$, identifying how a policy ($\pi_{\text{query}}, \pi_{\text{sum}}, \pi_{\text{update}}$) introduce stochasticity at each step.

\subsection{Decomposing Stochasticity in Deep Research Agents}
\label{sec:stochasticity_decomposition}

As DRAs iteratively gather and process information, stochasticity accumulates at each decision step. While we model this process as an \textit{information acquisition MDP}, isolating the exact contribution of every variable during a full execution is computationally intractable. However, to understand and mitigate the variance in the final research output, we conceptually decompose the stochasticity at any given step $t$ into two primary components:

\textbf{Propagated Stochasticity.} 
This represents the stochasticity inherited from previous steps. Because a DRA's current action depends on its prior state (e.g., previous search results and intermediate reasoning), any stochasticity introduced early in the process naturally cascades. Conceptually, it captures how much of the stochasticity in the next state $\mathbf{X}_{t+1}$ is simply a result of the agent starting from a noisy or varying distribution at $\mathbf{X}_{t}$. Even if the agent were to act completely deterministically at step $t$, the stochasticity in the incoming state would still cause variability in the subsequent trajectory.

\textbf{Intrinsic Stochasticity.} 
This is the new, step-specific stochasticity introduced by the agent's internal policies during the current time step $t$. It measures the stochasticity that remains even if the previous state $\mathbf{X}_t$ were known with absolute certainty. Given the exact same starting state, the LLM-driven components of the DRA will still exhibit variability.

To provide a structural lens for auditing this uncertainty, we further categorize \textit{Intrinsic Stochasticity} based on the three distinct functional modules of the DRA at each step:

\begin{itemize}[leftmargin=5mm,nosep,itemsep=2pt]
    \item \emph{Information Acquisition ($\Delta_{\text{Query}}$):} The stochasticity introduced when the agent formulates search queries. Different phrasings of search queries can lead to vastly different retrieved documents and paragraphs.
    \item \emph{Information Compression ($\Delta_{\text{Sum}}$):} The stochasticity introduced when the agent parses, filters, and summarizes the raw retrieved content. Different extractions of facts from the exact same source document lead to divergent contexts.
    \item \emph{Inference ($\Delta_{\text{Update}}$):} The stochasticity introduced when the agent reasons over the newly synthesized information to update its internal belief state, answer open questions, or draw intermediate conclusions.
\end{itemize}

We provide the mathematical formulation of the stochasticity decomposition in Appendix \ref{sec:appendix_stochasticity_math}. In practice, perfectly disentangling these two sources of stochasticity is fundamentally intractable. Because of the autoregressive nature of the agent's trajectory, the intrinsic stochasticity introduced by any module at step $t$ immediately alters the resulting state $\mathbf{X}_{t+1}$. Consequently, this intrinsic stochasticity naturally becomes the propagated stochasticity for step $t+1$ and all subsequent steps. Furthermore, calculating the exact mathematical boundaries between them requires computing expectations over all possible future agent states and search engine responses, which is computationally infeasible for open-ended tasks.

Therefore, rather than attempting to compute exact analytical stochasticity terms, our empirical framework in the following section relies on targeted interventions. By temporally and modularly ablating the DRA's steps, we can approximate how these deeply entangled sources of uncertainty propagate and ultimately impact the stochasticity and quality of the final research outcome.

\begin{table*}[htbp]
    \centering
    \caption{\textbf{Full Ablation Results.} A comprehensive breakdown of stochasticity metrics across varying temperatures ($\lambda$), temporal injection points (Steps), and specific modules. This table corresponds to the experimental decomposition analyzed in Section~\ref{sec:stochasticity_decomposition}.}
    \label{tab:full_ablation}
    \tiny 
    \resizebox{0.95\textwidth}{!}{ 
    \begin{tabular}{ccl|cccccccc}
        \toprule
        \multirow{2}{*}{\textbf{$\lambda$}} & \multirow{2}{*}{\textbf{Step}} & \multirow{2}{*}{\textbf{Module}} & \multicolumn{3}{c}{\textbf{$\widehat{\text{TV}}(\tilde{\mathbf{X}})$}} & \multicolumn{2}{c}{\textbf{$\sqrt{\widehat{\text{TV}}(\|\mathbf{X}\|_0)}$}} & \multicolumn{2}{c}{\textbf{Mean Count ($\mu$)}} & \multirow{2}{*}{\textbf{Acc.}} \\
        \cmidrule(lr){4-6} \cmidrule(lr){7-8} \cmidrule(lr){9-10}
        & & & \textbf{Ans.} & \textbf{Find.} & \textbf{Cit.} & \textbf{Find.} & \textbf{Cit.} & \textbf{Find.} & \textbf{Cit.} & \\
        \midrule

        \multirow{12}{*}{0.5} 
        & \multirow{3}{*}{1} 
          & Query ($\pi_{\text{query}}$) & 0.37 & 0.76 & 0.42 & 29.97 & 2.10 & 90.82 & 6.62 & 0.40 \\
        & & Sum ($\pi_{\text{sum}}$)     & 0.58 & 0.84 & 0.52 & 32.01 & 2.17 & 90.84 & 6.46 & 0.40 \\
        & & Update ($\pi_{\text{update}}$)  & 0.59 & 0.85 & 0.55 & 31.45 & 2.49 & 88.80 & 6.96 & 0.52 \\ \cmidrule{2-11}
        
        & \multirow{3}{*}{2} 
          & Query & 0.36 & 0.70 & 0.30 & 27.35 & 2.30 & 92.46 & 5.96 & 0.46 \\
        & & Sum   & 0.36 & 0.65 & 0.38 & 30.85 & 2.60 & 88.14 & 6.48 & 0.44 \\
        & & Update& 0.38 & 0.82 & 0.40 & 28.68 & 2.48 & 96.58 & 6.40 & 0.40 \\ \cmidrule{2-11}

        & \multirow{3}{*}{3} 
          & Query & 0.36 & 0.61 & 0.30 & 32.08 & 2.26 & 86.92 & 6.08 & 0.34 \\
        & & Sum   & 0.28 & 0.52 & 0.28 & 31.88 & 2.14 & 96.82 & 6.84 & 0.34 \\
        & & Update& 0.32 & 0.68 & 0.34 & 32.62 & 2.67 & 89.96 & 6.32 & 0.44 \\ \cmidrule{2-11}

        & \multirow{3}{*}{Combined} 
          & Query & 0.44 & 0.76 & 0.50 & 35.84 & 2.41 & 102.90 & 6.52 & 0.40 \\
        & & Sum   & 0.59 & 0.84 & 0.55 & 34.46 & 2.76 & 89.52 & 6.96 & 0.44 \\
        & & Update& 0.59 & 0.89 & 0.55 & 39.17 & 2.55 & 96.88 & 7.62 & 0.41 \\ 
        \midrule

        \multirow{12}{*}{1.0} 
        & \multirow{3}{*}{1} 
          & Query ($\pi_{\text{query}}$) & 0.51 & 0.80 & 0.46 & 25.98 & 2.71 & 91.38 & 6.12 & 0.36 \\
        & & Sum ($\pi_{\text{sum}}$)     & 0.53 & 0.88 & 0.51 & 38.32 & 2.51 & 88.46 & 6.60 & 0.46 \\
        & & Update ($\pi_{\text{update}}$)  & 0.62 & 0.89 & 0.59 & 28.22 & 2.48 & 88.10 & 6.70 & 0.46 \\ \cmidrule{2-11}
        
        & \multirow{3}{*}{2} 
          & Query & 0.43 & 0.79 & 0.45 & 25.85 & 1.93 & 93.53 & 6.74 & 0.42 \\
        & & Sum   & 0.37 & 0.54 & 0.29 & 24.57 & 1.63 & 93.77 & 6.02 & 0.46 \\
        & & Update& 0.40 & 0.85 & 0.48 & 28.48 & 2.14 & 84.60 & 6.62 & 0.40 \\ \cmidrule{2-11}

        & \multirow{3}{*}{3} 
          & Query & 0.34 & 0.63 & 0.32 & 30.65 & 1.90 & 83.54 & 6.30 & 0.50 \\
        & & Sum   & 0.30 & 0.53 & 0.29 & 31.29 & 2.04 & 94.47 & 6.09 & 0.38 \\
        & & Update& 0.41 & 0.63 & 0.38 & 37.55 & 2.68 & 101.00 & 6.70 & 0.40 \\ \cmidrule{2-11}

        & \multirow{3}{*}{Combined} 
          & Query & 0.49 & 0.87 & 0.51 & 36.14 & 2.70 & 89.50 & 6.52 & 0.50 \\
        & & Sum   & 0.61 & 0.92 & 0.62 & 34.93 & 2.67 & 88.93 & 6.07 & 0.41 \\
        & & Update& 0.59 & 0.88 & 0.58 & 39.49 & 2.30 & 88.74 & 6.00 & 0.56 \\ 
        \bottomrule
    \end{tabular}
    }
\end{table*}

\subsection{Empirical Investigation via Temperature Ablation}
\label{sec:temp_abl}

To empirically quantify these theoretical components, we leverage \textit{temperature} to control the variance of different policies. By increasing the temperature $\lambda$ for specific policy modules ($\pi_{\text{query}}$, $\pi_{\text{sum}}$, $\pi_{\text{update}}$)  at specific steps while setting $\lambda=0$ (greedy decoding) for others, 
we empirically investigate how these modules affect DRA's stochasticity over time.
We applied these temperature controls at distinct temporal stages: early (Step 1), middle (Step 2), late (Step 3), and combined (Steps 1--3). This setup allows us to trace how stochasticity introduced by specific modules at specific times impacts the variance of the final research output $\mathbf{X}_T$.

For the DRA system, we use ReAct realization based on Tongyi DeepResearch~\cite{team2025tongyi}. In order to see the relationship between accuracy and stochasticity, we use 20 instances from QA dataset WebWalkerQA~\cite{wu2025webwalker}. We use Qwen3-30B-A3B-Instruct-2507~\cite{yang2025qwen3} as our backbone LLM, and set the number of runs $k = 10$ for each instance. We use You.com search API that is designed to return consistent results for identical queries. All experiments are conducted within a short time window (50 hours) to minimize potential variation due to search index updates or webpage changes, thereby eliminating environmental stochasticity from retrieval. We summarize our experiment results in \cref{tab:full_ablation}.


\begin{figure}[!ht]
    \centering

    \begin{subfigure}{\textwidth}
        \centering
        \includegraphics[width=0.9\textwidth]{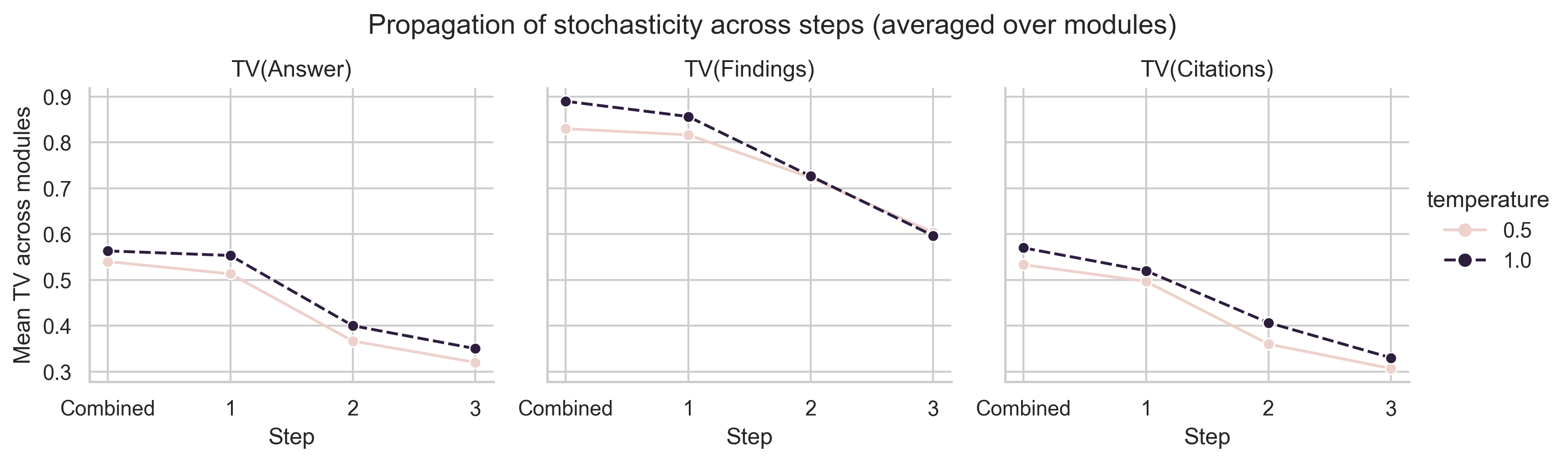}
        \caption{Propagation of stochasticity across steps.}
        \label{fig:step_propagation}
    \end{subfigure}

    \vspace{1em}

    \begin{minipage}{0.5\textwidth}
        \centering
        \begin{subfigure}{\linewidth}
            \centering
            \includegraphics[width=\linewidth]{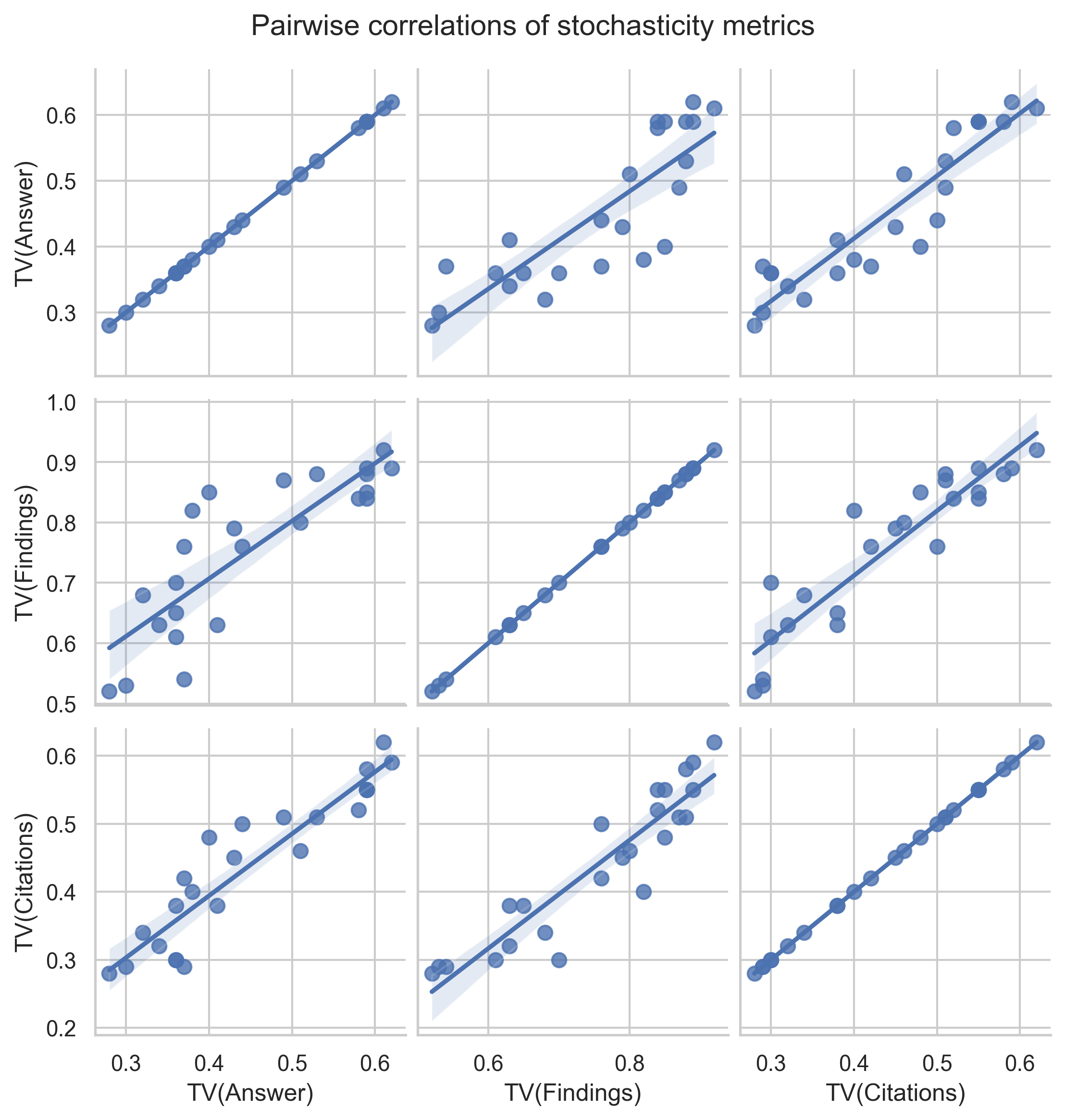}
            \caption{Pairwise correlations of stochasticity metrics.}
            \label{fig:tv_correlation}
        \end{subfigure}
    \end{minipage}
    \hfill
    \begin{minipage}{0.45\textwidth}
        \centering

        \begin{subfigure}{\linewidth}
            \centering
            \includegraphics[width=\linewidth]{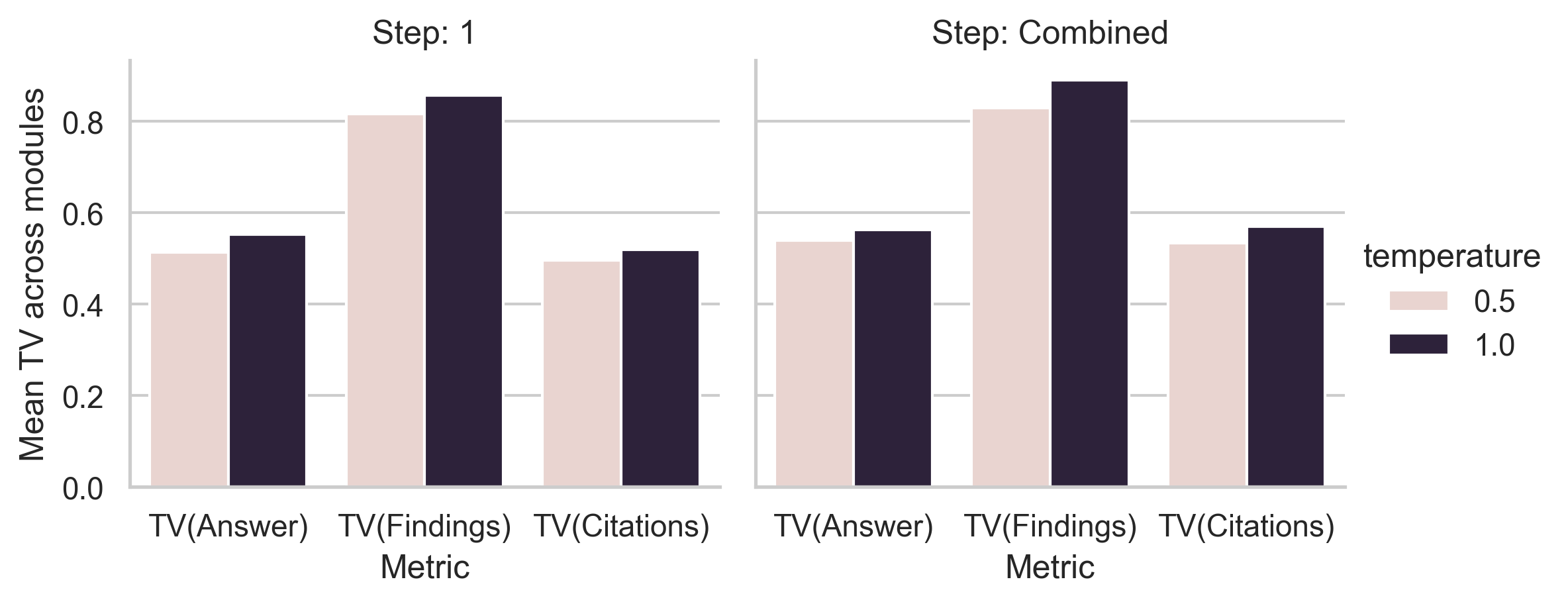}
            \caption{Effect of temperature on total variance.}
            \label{fig:temperature_effect}
        \end{subfigure}

        \vspace{1em}

        \begin{subfigure}{\linewidth}
            \centering
            \includegraphics[width=\linewidth]{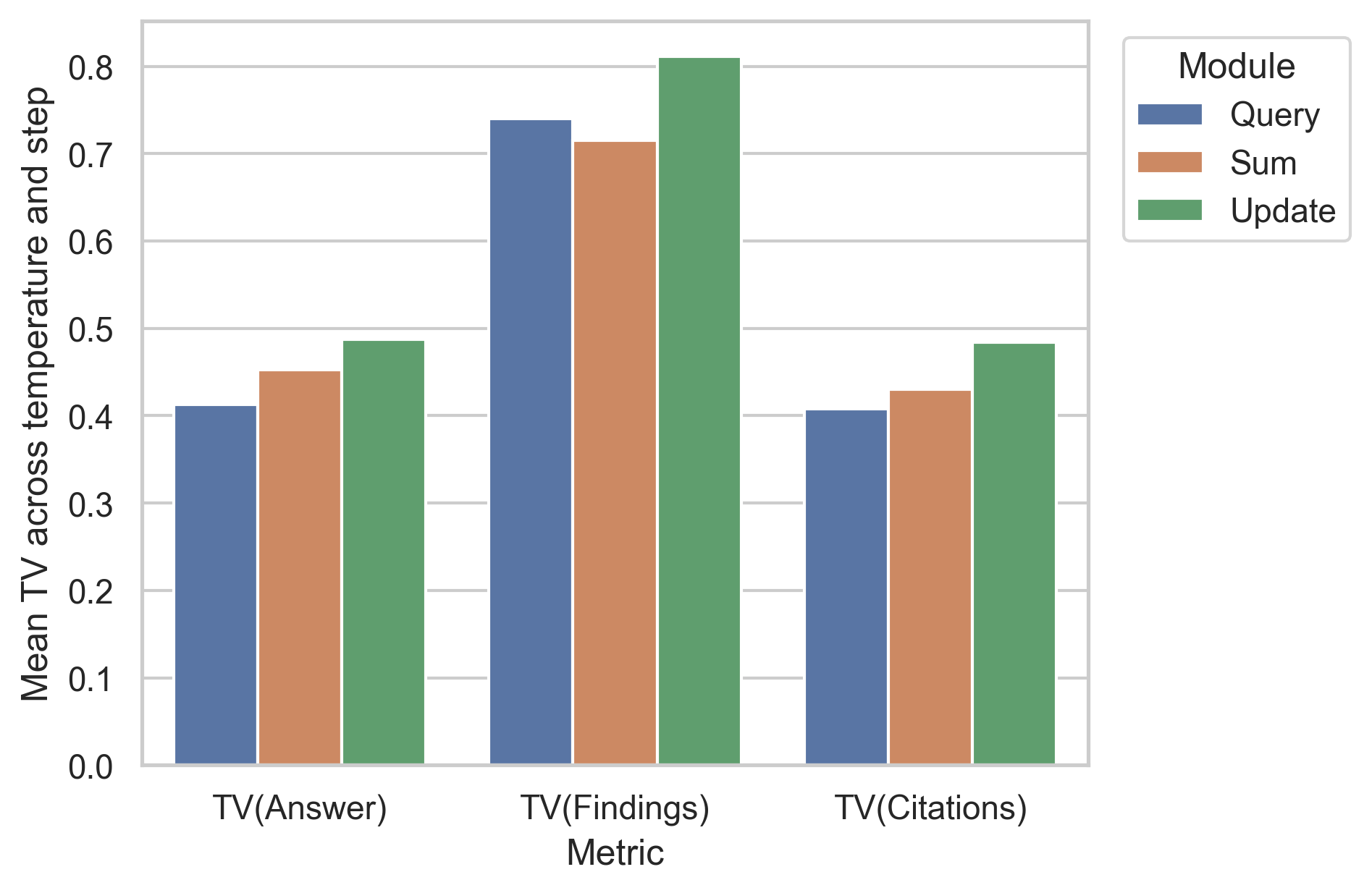}
            \caption{Module-wise contribution to stochasticity.}
            \label{fig:module_compare}
        \end{subfigure}

    \end{minipage}

    \caption{\textbf{Comprehensive Analysis of Stochasticity Behavior.}
    (a) Early-stage injections dominate propagation.
    (b) Strong positive correlations across answer, finding, and citation TV.
    (c) Higher sampling temperature increases total variance.
    (d) The Update module contributes the largest variance.}
    \label{fig:stochasticity_overview}
\end{figure}

\paragraph{Finding 1: Early-stage stochasticity influences final-stage stochasticity more than late-stage stochasticity.} 

To isolate temporal stochasticity effects independent of module choices, we average
$\widehat{\mathrm{TV}}(\tilde{\mathbf{X}})$ over the three modules
($\pi_{\text{query}}, \pi_{\text{sum}}, \pi_{\text{update}}$) for each perturbation step.
\cref{fig:step_propagation} plot the
resulting mean final variance for answer-level ($\mathbf{Y}$), finding-level
($\mathbf{B}$), and citation-level ($\mathbf{C}$) variances.

Across both temperatures and variance types, applying higher stochasticity to different modules at earlier steps consistently yields greater final variance than applying the same stochasticity at later stages. This empirical observation directly shows the significant influence of {propagated stochasticity} on the final variance. 
It confirms that uncertainty introduced in an initial state is magnified and ``carried forward'' to the total variance of the final output. 
The fact that the highest overall stochasticity occurs when policy temperatures are introduced at steps 1, 2, and 3 indicates that variance is cumulative, arising from the compounding of propagated and intrinsic stochasticity.
Consequently, stochasticity-control mechanisms are most effective when applied to early-stage modules to prevent this propagation.

\paragraph{Finding 2:  Findings, citations and answer stochasticity are positively correlated.}

To examine the relationship between different types of stochasticity, we plot
pairwise scatter plots between variance of findings $\widehat{\mathrm{TV}}(\mathbf{B})$, variance of citations
$\widehat{\mathrm{TV}}(\mathbf{C})$, and variance of answers $\widehat{\mathrm{TV}}(\mathbf{Y})$ across all temperatures, steps, and modules (\Cref{fig:tv_correlation}).
Each point corresponds to one experimental condition.

We observe strong positive correlations among all variance metrics $\widehat{\mathrm{TV}}(\mathbf{Y})$, $\widehat{\mathrm{TV}}(\mathbf{B})$, and $\widehat{\mathrm{TV}}(\mathbf{C})$,
confirming that these metrics capture a shared underlying notion of knowledge state stochasticity. 

\paragraph{Finding 3: Variance magnitude increases monotonically with temperature.}

To examine how stochasticity scales with sampling temperature, we compare
$\widehat{\mathrm{TV}}(\tilde{\mathbf{X}})$ under $\lambda=0.5$ and $\lambda=1.0$ while averaging
across the three modules. \cref{fig:temperature_effect} plot
the resulting mean final total variance for single-step
perturbations at Step~1 and cumulative perturbations across Steps~1--3, respectively.

Across both single-step and cumulative settings, higher sampling temperature consistently leads to larger estimated total variance. 
This monotonic increase indicates that temperature can act as a direct scaling factor for the total variance of the research output. 
Increasing $\lambda$ raises the entropy of the component policies ($\pi_{\text{query}}, \pi_{\text{sum}}, \pi_{\text{update}}$), thereby increasing intrinsic stochasticity at each step and the overall variance.

\paragraph{Finding 4: Higher stochasticity does not imply higher accuracy.}

Although increased stochasticity can encourage exploration, our results show that larger variance
$\widehat{\mathrm{TV}}(\tilde{\mathbf{X}})$ does not consistently lead to higher answer accuracy.
Table~\ref{tab:tv_vs_acc} shows examples where 
the final variance corresponds to similar or worse accuracy, as stochasticity increases.

\begin{table}[htbp]
\centering
\small
\caption{Examples showing that higher stochasticity does not monotonically improve accuracy. We select examples that have each one of $\lambda$, step, and module different.}
\label{tab:tv_vs_acc}
\begin{tabular}{c c c c c}
\toprule
$\lambda$ & Step & Module & $\widehat{\mathrm{TV}}(\mathbf{B})$ & Acc. \\
\midrule
0.5 & Step 1 & Query  & 0.76 & 0.40 \\
1.0 & Step 1 & Query  & 0.80 & 0.36 \\
\midrule
0.5 & Combined & Update & 0.89 & 0.41 \\
1.0 & Combined & Update & 0.88 & 0.56 \\
\midrule
0.5 & Step 1 & Sum & 0.84 & 0.40 \\
0.5 & Step 3 & Sum & 0.52 & 0.34 \\
\bottomrule
\end{tabular}
\end{table}

Across these examples, we observe that larger $\widehat{\mathrm{TV}}(\mathbf{B})$ do not
consistently imply higher accuracy, indicating that stochasticity alone is not a
reliable driver of performance.

\paragraph{Finding 5: Findings are more stochastic than citations.}

We compare stochasticity at the level of internal findings ($\mathbf{B}$) and external
citations ($\mathbf{C}$). Table~\ref{tab:find_vs_cit} reports averages
across $\lambda$, step, and module.

\begin{table}[htbp]
\centering
\caption{Average stochasticity of findings vs.\ citations 
across modules, steps, and temperatures.}
\label{tab:find_vs_cit}
\begin{tabular}{c|cc}
\toprule
Metric & Findings ($\mathbf{B}$) & Citations ($\mathbf{C}$) \\
\midrule
$\widehat{\mathrm{TV}}(\tilde{\mathbf{X}})$        & 0.76 & 0.44 \\
\bottomrule
\end{tabular}
\end{table}

Findings ($\mathbf{B}$) exhibit substantially larger total variance than citations ($\mathbf{C}$). 
This suggests that even though DRAs retrieve relatively consistent evidence sources (indicated by lower citation variance), the process of internalizing that evidence through information compression and inference policy introduces a significant amount of stochasticity. 

\paragraph{Finding 6: The inference module has a greater impact on the final stochasticity than the information acquisition and compression modules.}

To compare the effect of different modules causing intrinsic stochasticity, we average the stochasticity metrics across time for the three policy modules (\cref{fig:module_compare}). We find that adding stochasticity to $\pi_{\text{update}}$ yields the highest final output stochasticity across all three metrics. 
This suggests that mitigating stochasticity during inference (belief update) is more impactful for achieving overall system stability than controlling variances for the information acquisition and compression stages.

\section{Mitigation Attempt for Stochasticity}
\label{sec: miti}

Based on our findings, we provide initial strategies to 
lower stochasticity in DRA systems while maintaining (or increasing) their accuracy. In order to make our methods generalizable, we switch from a controlled environment to calling backbone LLMs through APIs. A significant challenge in this setting is the non-determinism in API LLM inference. Unlike local inference, API inference is non-batch invariant~\cite{he2025nondeterminism}, meaning that setting temperature to be $0$ is insufficient to eliminate stochasticity. 

Since we cannot reduce stochasticity under this more general setting through lowering temperature as in \ref{app:temp_tune}, we try to mitigate stochasticity while preserving accuracy through other algorithmic designs. We call Qwen3-235B-A22B-Instruct-2507-tput through Together AI API as backbone and use 20 instances from DeepSearchQA~\cite{gupta2025deepsearchqa} as a more complex dataset to test our mitigation strategies. We keep the number of runs $k = 10$ for each instance.
Throughout the experiments, we set the temperature to be $\lambda = 1$. We use the same search API settings as in \cref{sec:temp_abl}.

\subsection{Method 1: Structured Summarization and Reasoning Output}

To reduce the intrinsic stochasticity of the information compression ($\pi_{\text{sum}}$) and inference policy ($\pi_{\text{update}}$), we impose a structured output constraint for them. 
By forcing the model to generate outputs within a predefined JSON or Markdown schema, we expect to reduce the stylistic variations in the output, there by reducing $\Delta_{\text{Sum}}$ and $\Delta_{\text{Update}}$. 
An example of the structured output is shown in \ref{app:structured}.
As discussed in \textbf{Finding 6}, the stochasticity of the inference module has the greatest influence on the variance of the final output.
Therefore, we expect that imposing a structured output constraint on $\pi_{\text{update}}$ will reduce overall stochasticity more than imposing such a constraint on $\pi_{\text{sum}}$. 

\subsection{Method 2: Reducing Early-stage Query Stochasticity}

We also adopt a consensus-based ensemble for the information acquisition module ($\pi_{\text{query}}$). 
We issue $N$ independent sets of queries and retain only the intersection $a_t = \bigcap_{i=1}^N \text{queries}_i$. This ensures the agent only proceeds with queries that multiple runs have consensus in, effectively reducing early-stage query stochasticity $\Delta_{\text{Query}}$. To maintain efficiency, we decay $N \to 1$ as $t$ increases. 
Since early-stage stochasticity has a greater impact than later-stage stochasticity (\textbf{Finding 1}), we expect that this annealing over the number of runs will still effectively reduce variance. Finally, if no intersection exists, we use the first proposed set of queries.

\subsection{Experimental Result}

As shown in Table~\ref{tab:stochasticity_mitigation}, 
both mitigation methods reduce the stochasticity of the research output while maintaining (or sometimes increasing) the research quality. 
In addition, imposing structural output constraints on $\pi_{\text{update}}$ does reduce overall stochasticity more than imposing such a constraint on $\pi_{\text{sum}}$, reducing the average stochasticity by 5\%, and imposing structural output constraints for both modules yields more output stability than filtering search queries via intersection by an average of 6\%. 
By utilizing all mitigations together, we achieve the lowest stochasticity while having the highest accuracy. For Accuracy, we see a 12\% increase w.r.t Baseline, with an average 22\% decrease in stochasticity.
These results demonstrate that 
stochasticity can be effectively mitigated through algorithmic design
and controlling for stochasticity will not degrade the quality of the research output.

\begin{table}[!t]
    \centering
    \small
    \setlength{\tabcolsep}{3.5pt}
    \caption{Comparison of Stochasticity Mitigation Strategies. 
    The best performance in each category is highlighted in bold. 
    Struc.\ Comb.\ applies structured summarization and reasoning simultaneously; 
    Comb.\ integrates all three mitigation methods. Avg.\ TV represents the average value of the three-level stochasticity.}
    \label{tab:stochasticity_mitigation}
    \begin{tabular}{l c c c c c}
        \toprule
        \textbf{Method} 
        & \textbf{Acc.} 
        & \textbf{$\widehat{\mathrm{TV}}(\mathbf{Y})$} 
        & \textbf{$\widehat{\mathrm{TV}}(\mathbf{B})$} 
        & \textbf{$\widehat{\mathrm{TV}}(\mathbf{C})$}
        & \textbf{Avg. TV} \\
        \midrule
        Baseline       & 0.24 & 0.62 & 0.83 & 0.62 & 0.69 \\
        Struc. Sum.    & 0.28 & 0.58 & 0.80 & 0.64 & 0.67 \\
        Struc. Update  & 0.32 & 0.52 & 0.75 & 0.58 & 0.62 \\
        Struc. Comb.   & \textbf{0.36} & 0.44 & 0.68 & 0.56 & 0.56 \\
        Quer. Int.     & 0.32 & 0.50 & 0.74 & 0.61 & 0.62 \\
        \textbf{Comb.} & \textbf{0.36} & \textbf{0.38} & \textbf{0.61} & \textbf{0.43} & \textbf{0.47} \\
        \bottomrule
    \end{tabular}
\end{table}

\section{Conclusion}

In this paper, we provide a systematic analysis of stochasticity in DRAs, introducing principled metrics and a variance-decomposition framework for attributing stochasticity to specific modules and time steps. Our results show that stochasticity injected early in a trajectory propagates strongly and that inference is the dominant source of stochasticity. We also showcase the possibility of reducing stochasticity by proposing mitigation methods based on algorithmic designs. Overall, this work takes the first step toward building more reproducible deep research agents. Future directions include designing a broader range of mitigation strategies and developing deeper theoretical analyses of stochasticity in DRAs.

\bibliographystyle{plainnat}
\bibliography{main}  

@article{min2023factscore,
  title={Factscore: Fine-grained atomic evaluation of factual precision in long form text generation},
  author={Min, Sewon and Krishna, Kalpesh and Lyu, Xinxi and Lewis, Mike and Yih, Wen-tau and Koh, Pang Wei and Iyyer, Mohit and Zettlemoyer, Luke and Hajishirzi, Hannaneh},
  journal={arXiv preprint arXiv:2305.14251},
  year={2023}
}

@article{du2025deepresearch,
  title={DeepResearch Bench: A Comprehensive Benchmark for Deep Research Agents},
  author={Du, Mingxuan and Xu, Benfeng and Zhu, Chiwei and Wang, Xiaorui and Mao, Zhendong},
  journal={arXiv preprint arXiv:2506.11763},
  year={2025}
}

@article{wei2025browsecomp,
  title={Browsecomp: A simple yet challenging benchmark for browsing agents},
  author={Wei, Jason and Sun, Zhiqing and Papay, Spencer and McKinney, Scott and Han, Jeffrey and Fulford, Isa and Chung, Hyung Won and Passos, Alex Tachard and Fedus, William and Glaese, Amelia},
  journal={arXiv preprint arXiv:2504.12516},
  year={2025}
}

@inproceedings{yao2022react,
  title={React: Synergizing reasoning and acting in language models},
  author={Yao, Shunyu and Zhao, Jeffrey and Yu, Dian and Du, Nan and Shafran, Izhak and Narasimhan, Karthik R and Cao, Yuan},
  booktitle={The eleventh international conference on learning representations},
  year={2022}
}

@article{team2025tongyi,
  title={Tongyi deepresearch technical report},
  author={Tongyi DeepResearch, Team and Li, Baixuan and Zhang, Bo and Zhang, Dingchu and Huang, Fei and Li, Guangyu and Chen, Guoxin and Yin, Huifeng and Wu, Jialong and Zhou, Jingren and others},
  journal={arXiv preprint arXiv:2510.24701},
  year={2025}
}

@article{huang2025deep,
  title={Deep research agents: A systematic examination and roadmap},
  author={Huang, Yuxuan and Chen, Yihang and Zhang, Haozheng and Li, Kang and Zhou, Huichi and Fang, Meng and Yang, Linyi and Li, Xiaoguang and Shang, Lifeng and Xu, Songcen and others},
  journal={arXiv preprint arXiv:2506.18096},
  year={2025}
}

@misc{gupta2025deepsearchqa,
  title={DeepSearchQA: Bridging the Comprehensiveness Gap for Deep Research Agents},
  author={Gupta, Nikita and Chatterjee, Riju and Haas, Lukas and Tao, Connie and Wang, Andrew and Liu, Chang and Oiwa, Hidekazu and Gribovskaya, Elena and Ackermann, Jan and Blitzer, John and Goldshtein, Sasha and Das, Dipanjan},
  year={2025}
}

@article{alzubi2025open,
  title={Open deep search: Democratizing search with open-source reasoning agents},
  author={Alzubi, Salaheddin and Brooks, Creston and Chiniya, Purva and Contente, Edoardo and von Gerlach, Chiara and Irwin, Lucas and Jiang, Yihan and Kaz, Arda and Nguyen, Windsor and Oh, Sewoong and others},
  journal={arXiv preprint arXiv:2503.20201},
  year={2025}
}

@article{lewis2020retrieval,
  title={Retrieval-augmented generation for knowledge-intensive nlp tasks},
  author={Lewis, Patrick and Perez, Ethan and Piktus, Aleksandra and Petroni, Fabio and Karpukhin, Vladimir and Goyal, Naman and K{\"u}ttler, Heinrich and Lewis, Mike and Yih, Wen-tau and Rockt{\"a}schel, Tim and others},
  journal={Advances in neural information processing systems},
  volume={33},
  pages={9459--9474},
  year={2020}
}

@article{gao2023retrieval,
  title={Retrieval-augmented generation for large language models: A survey},
  author={Gao, Yunfan and Xiong, Yun and Gao, Xinyu and Jia, Kangxiang and Pan, Jinliu and Bi, Yuxi and Dai, Yixin and Sun, Jiawei and Wang, Haofen and Wang, Haofen},
  journal={arXiv preprint arXiv:2312.10997},
  volume={2},
  number={1},
  year={2023}
}

@article{jin2025search,
  title={Search-r1: Training llms to reason and leverage search engines with reinforcement learning},
  author={Jin, Bowen and Zeng, Hansi and Yue, Zhenrui and Yoon, Jinsung and Arik, Sercan and Wang, Dong and Zamani, Hamed and Han, Jiawei},
  journal={arXiv preprint arXiv:2503.09516},
  year={2025}
}

@article{song2025r1,
  title={R1-searcher: Incentivizing the search capability in llms via reinforcement learning},
  author={Song, Huatong and Jiang, Jinhao and Min, Yingqian and Chen, Jie and Chen, Zhipeng and Zhao, Wayne Xin and Fang, Lei and Wen, Ji-Rong},
  journal={arXiv preprint arXiv:2503.05592},
  year={2025}
}

@article{zheng2025deepresearcher,
  title={Deepresearcher: Scaling deep research via reinforcement learning in real-world environments},
  author={Zheng, Yuxiang and Fu, Dayuan and Hu, Xiangkun and Cai, Xiaojie and Ye, Lyumanshan and Lu, Pengrui and Liu, Pengfei},
  journal={arXiv preprint arXiv:2504.03160},
  year={2025}
}

@article{li2025search,
  title={Search-o1: Agentic search-enhanced large reasoning models},
  author={Li, Xiaoxi and Dong, Guanting and Jin, Jiajie and Zhang, Yuyao and Zhou, Yujia and Zhu, Yutao and Zhang, Peitian and Dou, Zhicheng},
  journal={arXiv preprint arXiv:2501.05366},
  year={2025}
}

@article{nie2025flashresearch,
  title={FlashResearch: Real-time Agent Orchestration for Efficient Deep Research},
  author={Nie, Lunyiu and Lipka, Nedim and Rossi, Ryan A and Chaudhuri, Swarat},
  journal={arXiv preprint arXiv:2510.05145},
  year={2025}
}

@misc{google2025gemini,
  author = {{Google Gemini Team}},
  title = {Gemini Deep Research: Your Personal Research Assistant},
  howpublished = {\url{https://gemini.google/overview/deep-research/}},
  year = {2025}
}

@misc{openai2025deepresearch,
  author = {{OpenAI}},
  title = {Introducing Deep Research},
  howpublished = {\url{https://openai.com/index/introducing-deep-research/}},
  year = {2025}
}

@misc{elovic2023gptr,
  author = {Assaf Elovic},
  title = {GPT Researcher: An autonomous agent that conducts deep research on any data},
  year = {2023},
  publisher = {GitHub},
  journal = {GitHub repository},
  howpublished = {\url{https://github.com/assafelovic/gpt-researcher}},
  commit = {master}
}

@article{wu2025webwalker,
  title={Webwalker: Benchmarking llms in web traversal},
  author={Wu, Jialong and Yin, Wenbiao and Jiang, Yong and Wang, Zhenglin and Xi, Zekun and Fang, Runnan and Zhang, Linhai and He, Yulan and Zhou, Deyu and Xie, Pengjun and others},
  journal={arXiv preprint arXiv:2501.07572},
  year={2025}
}

@article{xu2025researcherbench,
  title={Researcherbench: Evaluating deep ai research systems on the frontiers of scientific inquiry},
  author={Xu, Tianze and Lu, Pengrui and Ye, Lyumanshan and Hu, Xiangkun and Liu, Pengfei},
  journal={arXiv preprint arXiv:2507.16280},
  year={2025}
}

@article{yao2025rigorous,
  title={A Rigorous Benchmark with Multidimensional Evaluation for Deep Research Agents: From Answers to Reports},
  author={Yao, Yang and Wang, Yixu and Zhang, Yuxuan and Lu, Yi and Gu, Tianle and Li, Lingyu and Zhao, Dingyi and Wu, Keming and Wang, Haozhe and Nie, Ping and others},
  journal={arXiv preprint arXiv:2510.02190},
  year={2025}
}

@article{wang2025liveresearchbench,
  title={Liveresearchbench: A live benchmark for user-centric deep research in the wild},
  author={Wang, Jiayu and Ming, Yifei and Dulepet, Riya and Chen, Qinglin and Xu, Austin and Ke, Zixuan and Sala, Frederic and Albarghouthi, Aws and Xiong, Caiming and Joty, Shafiq},
  journal={arXiv preprint arXiv:2510.14240},
  year={2025}
}

@article{gu2024survey,
  title={A survey on llm-as-a-judge},
  author={Gu, Jiawei and Jiang, Xuhui and Shi, Zhichao and Tan, Hexiang and Zhai, Xuehao and Xu, Chengjin and Li, Wei and Shen, Yinghan and Ma, Shengjie and Liu, Honghao and others},
  journal={The Innovation},
  year={2024},
  publisher={Elsevier}
}

@article{coelho2025deepresearchgym,
  title={Deepresearchgym: A free, transparent, and reproducible evaluation sandbox for deep research},
  author={Coelho, Jo{\~a}o and Ning, Jingjie and He, Jingyuan and Mao, Kangrui and Paladugu, Abhijay and Setlur, Pranav and Jin, Jiahe and Callan, Jamie and Magalh{\~a}es, Jo{\~a}o and Martins, Bruno and others},
  journal={arXiv preprint arXiv:2505.19253},
  year={2025}
}

@article{mustahsan2025stochasticity,
  title={Stochasticity in Agentic Evaluations: Quantifying Inconsistency with Intraclass Correlation},
  author={Mustahsan, Zairah and Lim, Abel and Anand, Megna and Jain, Saahil and McCann, Bryan},
  journal={arXiv preprint arXiv:2512.06710},
  year={2025}
}

@article{yang2025qwen3,
  title={Qwen3 technical report},
  author={Yang, An and Li, Anfeng and Yang, Baosong and Zhang, Beichen and Hui, Binyuan and Zheng, Bo and Yu, Bowen and Gao, Chang and Huang, Chengen and Lv, Chenxu and others},
  journal={arXiv preprint arXiv:2505.09388},
  year={2025}
}

@article{he2025nondeterminism,
  author = {Horace He and Thinking Machines Lab},
  title = {Defeating Nondeterminism in LLM Inference},
  journal = {Thinking Machines Lab Blogs},
  year = {2025},
  note = {https://thinkingmachines.ai/blog/defeating-nondeterminism-in-llm-inference/}
}

\newpage
\appendix
\onecolumn

\section{Proofs} \label{app:proofs}

\subsection{Proof of \Cref{theorem:the_1}}

\begin{proof}
By definition, the covariance matrix is the expected outer product of the centered random variable:
$$\boldsymbol{\Sigma} = \mathbb{E} \left[ (\mathbf{X} - \boldsymbol{\mu})(\mathbf{X} - \boldsymbol{\mu})^\top \right] = \mathbb{E}[\mathbf{X}\mathbf{X}^\top] - \boldsymbol{\mu}\boldsymbol{\mu}^\top.$$
On the RHS, we have
\begin{align*} 
&\mathbb{E}\left[(\mathbf{X}_1 - \mathbf{X}_2)(\mathbf{X}_1^\top - \mathbf{X}_2^\top) \right] \\
&= \mathbb{E}[\mathbf{X}_1\mathbf{X}_1^\top] - \mathbb{E}[\mathbf{X}_1\mathbf{X}_2^\top] - \mathbb{E}[\mathbf{X}_2\mathbf{X}_1^\top] + \mathbb{E}[\mathbf{X}_2\mathbf{X}_2^\top].
\end{align*} 
Since $\mathbf{X}_1$ and $\mathbf{X}_2$ are i.i.d.,
the second moments are equal:
$$\mathbb{E}[\mathbf{X}_1\mathbf{X}_1^\top] = \mathbb{E}[\mathbf{X}_2\mathbf{X}_2^\top] = \mathbb{E}[\mathbf{X}\mathbf{X}^\top].$$
Since $\mathbf{X}_1$ and $\mathbf{X}_2$ are independent, 
we have the expectation of the product is the product of expectations.
$$\mathbb{E}[\mathbf{X}_1\mathbf{X}_2^\top] = \mathbb{E}[\mathbf{X}_1]\mathbb{E}[\mathbf{X}_2]^\top 
= \boldsymbol{\mu}\boldsymbol{\mu}^\top,$$
$$\mathbb{E}[\mathbf{X}_2\mathbf{X}_1^\top] = \mathbb{E}[\mathbf{X}_2]\mathbb{E}[\mathbf{X}_1]^\top 
= \boldsymbol{\mu}\boldsymbol{\mu}^\top.$$
Substituting these back into the expression for the RHS, we get
\begin{align*}
&\mathbb{E}\left[(\mathbf{X}_1 - \mathbf{X}_2)(\mathbf{X}_1^\top - \mathbf{X}_2^\top) \right] \\
&= \mathbb{E}[\mathbf{X}\mathbf{X}^\top] - \boldsymbol{\mu}\boldsymbol{\mu}^\top - \boldsymbol{\mu}\boldsymbol{\mu}^\top + \mathbb{E}[\mathbf{X}\mathbf{X}^\top] \\
&= 2\mathbb{E}[\mathbf{X}\mathbf{X}^\top] - 2\boldsymbol{\mu}\boldsymbol{\mu}^\top \\
&= 2 \left( \mathbb{E}[\mathbf{X}\mathbf{X}^\top] - \boldsymbol{\mu}\boldsymbol{\mu}^\top \right).
\end{align*}
This completes the derivation.
\end{proof}

\subsection{Formal Stochasticity Decomposition}
\label{sec:appendix_stochasticity_math}

\begin{proposition}[TV Decomposition]
\label{prop:total_decomposition}
Let $\mathbf{X}_t$ be the state at time $t$. The Total Variance at $t+1$, denoted $\text{TV}(\mathbf{X}_{t+1}) = \mathrm{Tr}(\mathrm{Var}(\mathbf{X}_{t+1}))$, decomposes into a propagated stochasticity term and an intrinsic stochasticity term $\mathcal{I}_t$:
\begin{equation}
    \text{TV}(\mathbf{X}_{t+1}) = \underbrace{\text{TV}_{\mathbf{X}_t}\left(\mathbb{E}[\mathbf{X}_{t+1} \mid \mathbf{X}_t]\right)}_{\text{Propagated Stochasticity}} + \underbrace{\mathbb{E}_{\mathbf{X}_t}[\mathcal{I}_t(\mathbf{X}_t)]}_{\text{Intrinsic Stochasticity}},
    \label{eq:total_decomposition}
\end{equation}
where the intrinsic stochasticity $\mathcal{I}_t(\mathbf{X}_t) = \mathrm{Tr}(\mathrm{Var}(\mathbf{X}_{t+1} \mid \mathbf{X}_t))$ can be further decomposed into three components controlled by the three policies, respectively:
\begin{equation}
    \mathcal{I}_t(\mathbf{X}_t) = \underbrace{\Delta_{\text{Query}}}_{\pi_{\text{query}}} + \underbrace{\Delta_{\text{Sum}}}_{\pi_{\text{sum}}} + \underbrace{\Delta_{\text{Update}}}_{\pi_{\text{update}}}.
    \label{eq:injection_decomposition}
\end{equation}
Assuming search results $i_t$ are deterministic given query $a_t$, these components are defined as the traces of their respective conditional variances:
\begin{align*}
    &\Delta_{\text{Query}} = \text{TV}_{a_t \sim \pi_{\text{query}}}\left(\mathbb{E}[\mathbf{X}_{t+1} \mid \mathbf{X}_t, a_t]\right), \\
    &\Delta_{\text{Sum}} = \mathbb{E}_{a_t}\left[\text{TV}_{h_t \sim \pi_{\text{sum}}}(\mathbb{E}[\mathbf{X}_{t+1} \mid \mathbf{X}_t, a_t, h_t])\right], \\
    &\Delta_{\text{Update}} = \mathbb{E}_{a_t, h_t}\left[\text{TV}_{\mathbf{X}_{t+1} \sim \pi_{\text{update}}}\left(\mathbf{X}_{t+1} \mid \mathbf{X}_t, a_t, h_t\right)\right].
\end{align*}
\end{proposition}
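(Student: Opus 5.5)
The plan is to apply the law of total covariance repeatedly and then take traces, since $\mathrm{Tr}$ is linear and commutes with expectation. First, condition on $\mathbf{X}_t$. The matrix law of total variance gives
\[
\mathrm{Var}(\mathbf{X}_{t+1}) = \mathrm{Var}\bigl(\mathbb{E}[\mathbf{X}_{t+1}\mid \mathbf{X}_t]\bigr) + \mathbb{E}\bigl[\mathrm{Var}(\mathbf{X}_{t+1}\mid \mathbf{X}_t)\bigr].
\]
Taking traces yields \eqref{eq:total_decomposition} directly, with $\mathcal{I}_t(\mathbf{X}_t)=\mathrm{Tr}(\mathrm{Var}(\mathbf{X}_{t+1}\mid\mathbf{X}_t))$. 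If one wants it in the spirit of \Cref{theorem:the_1}, the matrix identity itself can be checked by expanding $\mathbb{E}[\mathbf{X}_{t+1}\mathbf{X}_{t+1}^\top]$ via the tower property and subtracting $\boldsymbol{\mu}\boldsymbol{\mu}^\top$. Everything is bounded because the vectors are binary or normalized, so all moments are finite.

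Second, fix $\mathbf{X}_t$ and work under the conditional law. The generative order within a step is $a_t \sim \pi_{\mathrm{query}}$, then $i_t$, then $h_t \sim \pi_{\mathrm{sum}}$, then $\mathbf{X}_{t+1}\sim\pi_{\mathrm{update}}$. Under the stated assumption that $i_t$ is a deterministic function of $a_t$, conditioning on $a_t$ also fixes $i_t$, so the relevant filtration is $\sigma(\mathbf{X}_t)\subseteq\sigma(\mathbf{X}_t,a_t)\subseteq\sigma(\mathbf{X}_t,a_t,h_t)$.

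I would first apply total variance conditioning on $a_t$:
\[
\mathrm{Var}(\mathbf{X}_{t+1}\mid\mathbf{X}_t)=\mathrm{Var}_{a_t}\bigl(\mathbb{E}[\mathbf{X}_{t+1}\mid\mathbf{X}_t,a_t]\bigr)+\mathbb{E}_{a_t}\bigl[\mathrm{Var}(\mathbf{X}_{t+1}\mid\mathbf{X}_t,a_t)\bigr].
\]
Inside the second term I would apply it again, conditioning on $h_t$:
\[
\mathrm{Var}(\mathbf{X}_{t+1}\mid\mathbf{X}_t,a_t)=\mathrm{Var}_{h_t}\bigl(\mathbb{E}[\mathbf{X}_{t+1}\mid\mathbf{X}_t,a_t,h_t]\bigr)+\mathbb{E}_{h_t}\bigl[\mathrm{Var}(\mathbf{X}_{t+1}\mid\mathbf{X}_t,a_t,h_t)\bigr].
\]
Substituting, taking traces, and using linearity of trace and expectation together with the tower property ($\mathbb{E}_{a_t}\mathbb{E}_{h_t\mid a_t}=\mathbb{E}_{a_t,h_t}$) produces exactly the three terms $\Delta_{\text{Query}}$, $\Delta_{\text{Sum}}$ and $\Delta_{\text{Update}}$, which is \eqref{eq:injection_decomposition}.

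The main care point is attribution rather than algebra. The conditional distributions must be those induced by the policies, so that $a_t$ is drawn from $\pi_{\mathrm{query}}(\cdot\mid q,\mathbf{b}_t)$, $h_t$ from $\pi_{\mathrm{sum}}$ given $(a_t,i_t(a_t))$, and $\mathbf{X}_{t+1}$ from $\pi_{\mathrm{update}}$ given $(\mathbf{X}_t,a_t,h_t)$. This requires $\mathbf{X}_t$ to capture the conditioning state $\mathbf{b}_t$; I would state that identification explicitly, since otherwise conditioning on $\mathbf{X}_t$ does not fix the policy inputs. The three terms are labeled by policy only in this sense of nested conditioning: each one is the variance added at its stage of the chain. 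Nonnegativity of each term follows because each is the trace of a covariance matrix.
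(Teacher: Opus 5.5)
Your proposal is correct and follows the paper's proof essentially step for step: you take the trace of the matrix law of total variance, conditioning first on $\mathbf{X}_t$, then on $a_t$ (with $i_t$ fixed by $a_t$), then on $h_t$, and use the tower property to collect the expectations into $\Delta_{\text{Sum}}$ and $\Delta_{\text{Update}}$. Your extra remark is a fair refinement that the paper leaves implicit: the policy labels are only meaningful if $\mathbf{X}_t$ carries the policy inputs $\mathbf{b}_t$, which can fail when $\mathbf{X}_t$ is, say, only the accumulated citations. It does not affect the identity itself.
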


The decomposition presented in Proposition \ref{prop:total_decomposition} provides a structural lens through which we can audit the sources of uncertainty in the agent's state evolution. By applying the Law of Total Variance, we distinguish between uncertainty that is inherited from previous steps and uncertainty that is generated at the current step.

The first term, \textit{Propagated Stochasticity}, quantifies the `momentum' of prior stochasticity. Mathematically, it represents the variance of the conditional expectation; conceptually, it captures how much of the variance in $\mathbf{X}_{t+1}$ is simply a result of the agent starting from a noisy distribution $\mathbf{X}_{t}$. 

On the other hand, the \textit{Intrinsic Stochasticity} term, $\mathbb{E}_{\mathbf{X}_t}[\mathcal{I}_t(\mathbf{X}_t)]$, represents the stochasticity introduced during the current time step $t+1$. It measures the variance that remains even if the previous state $\mathbf{X}_t$ were known with absolute certainty. By labeling this as a \textit{intrinsic}, the framework treats the agent’s internal policies as sources of randomness that increases the stochasticity of the trajectory.

The decomposition of $\mathcal{I}_t(\mathbf{X}_t)$  isolates the contribution of each policy: 
\begin{itemize}[leftmargin=5mm,nosep,itemsep=2pt]
    \item $\Delta_{\text{Query}}$ measures the variance of the expected output caused by the agent's choice of information acquisition actions $a_t$.
    \item $\Delta_{\text{Sum}}$ reflects the variance introduced by the compression and synthesis of external search results into the summary $h_t$.
    \item $\Delta_{\text{Update}}$ captures the variance in the belief state update (inference), i.e., the stochasticity in integrating the synthesized information to infer the next state $\mathbf{X}_{t+1}$.
\end{itemize}

\subsubsection{Proof of Proposition~\ref{prop:total_decomposition}}

\begin{proof}
We apply the Law of Total Variance for random vectors, $\mathrm{Var}(\mathbf{X}) = \mathrm{Var}(\mathbb{E}[\mathbf{X}|\mathbf{Z}]) + \mathbb{E}[\mathrm{Var}(\mathbf{X}|\mathbf{Z})]$. Since the trace operator is linear, $\mathrm{Tr}(\mathrm{Var}(\mathbf{X})) = \mathrm{Tr}(\mathrm{Var}(\mathbb{E}[\mathbf{X}|\mathbf{Z}])) + \mathbb{E}[\mathrm{Tr}(\mathrm{Var}(\mathbf{X}|\mathbf{Z}))]$. We apply this recursively across the causal chain: $\mathbf{X}_t \to a_t \to h_t \to \mathbf{X}_{t+1}$.

\textbf{Isolation of History.} 
Conditioning on the previous state $\mathbf{X}_t$, we separate the variance propagated from the history distribution from the fresh variance introduced at step $t$:
\begin{align*}
    \text{TV}(\mathbf{X}_{t+1}) &= \mathrm{Tr}\left(\mathrm{Var}_{\mathbf{X}_t}\left(\mathbb{E}[\mathbf{X}_{t+1} \mid \mathbf{X}_t]\right)\right) \\
    &\quad + \mathbb{E}_{\mathbf{X}_t}\left[\text{TV}(\mathbf{X}_{t+1} \mid \mathbf{X}_t)\right]
\end{align*}
The second term is the expected intrinsic injection. We now decompose the inner term $\text{TV}(\mathbf{X}_{t+1} \mid \mathbf{X}_t)$.

\textbf{Isolation of Query Divergence ($\Delta_{\text{Query}}$).}
Conditioning on the action $a_t \sim \pi_{\text{query}}$, we apply the Law of Total Variance:
\begin{equation}
\begin{aligned}
\text{TV}(\mathbf{X}_{t+1} \mid \mathbf{X}_t)
&= \underbrace{\mathrm{Tr}\left(\mathrm{Var}_{a_t}(\mathbb{E}[\mathbf{X}_{t+1} \mid \mathbf{X}_t, a_t])\right)}_{\Delta_{\text{Query}}} \\
&\quad + \mathbb{E}_{a_t}\left[\text{TV}(\mathbf{X}_{t+1} \mid \mathbf{X}_t, a_t)\right]
\end{aligned}
\end{equation}
Here, $\Delta_{\text{Query}}$ captures the spread in the future state caused solely by the stochastic choice of queries.

\textbf{Isolation of Summarization Noise ($\Delta_{\text{Sum}}$).}
We decompose the residual term $\mathbb{E}_{a_t}[\text{TV}(\mathbf{X}_{t+1} \mid \mathbf{X}_t, a_t)]$. Since retrieved information $i_t$ is deterministic given $a_t$, the next stochastic component is the distilled findings $h_t \sim \pi_{\text{sum}}$. Conditioning on $h_t$:
\begin{equation}
\begin{aligned}
\text{TV}(\mathbf{X}_{t+1} \mid \mathbf{X}_t, a_t)
&= \underbrace{
\mathrm{Tr}\left(\mathrm{Var}_{h_t}(\mathbb{E}[\mathbf{X}_{t+1} \mid \mathbf{X}_t, a_t, h_t])\right)
}_{\text{Variance due to summarization}} \\
&\quad + \mathbb{E}_{h_t}\left[
\text{TV}(\mathbf{X}_{t+1} \mid \mathbf{X}_t, a_t, h_t)
\right]
\end{aligned}
\end{equation}
Taking the expectation over $a_t$ yields $\Delta_{\text{Sum}}$.

\textbf{Isolation of Update Instability ($\Delta_{\text{Update}}$).}
The remaining term represents the variance of the update step itself, given fixed history, action, and summary. This captures the inherent stochasticity of the reasoning policy $\pi_{\text{update}}$:
\[
\Delta_{\text{Update}} = \mathbb{E}_{a_t} \mathbb{E}_{h_t} \left[ \text{TV}(\mathbf{X}_{t+1} \mid \mathbf{X}_t, a_t, h_t) \right]
\]
Substituting these terms back into Eq.~\eqref{eq:total_decomposition} yields the full decomposition.
\end{proof}

\section{Prompts} \label{app:prompts}

\subsection{Claim Decomposition}

\lstset{
    showspaces=false,
    showstringspaces=false,
    basicstyle=\ttfamily\small,
    keywordstyle=\color{black},
    commentstyle=\color{black},
    stringstyle=\color{black},
    breaklines=true,
    frame=single,
    keepspaces=true
}

\begin{lstlisting}[language=Python, caption=Claim Decomposition]

## Task Description
Extract all factual claims from the provided report. Each claim should be a factual statement that
can be verified. Claims may or may not have supporting citations.

## Input
A Research Question and a complete report containing factual claims, some of which may have
citation markers and corresponding URLs (either inline or in a reference section).

## Output Requirements
- Extract each distinct factual claim throughout the entire report
- For each claim, output a JSON object with:
  - The exact claim text as a string
  - The original text from the report containing this claim (context)
  - The corresponding citation URL as source (if a citation marker directly follows the claim)
- If a claim has a citation marker directly following it, return the supporting URL as source
- If a claim does not have a citation marker directly following it, return an empty string for source
- Ensure all string values are properly escaped for valid JSON format (e.g., Replace internal quotation
  marks (") with escaped quotation marks (\\")) in the claim and context
- Return a JSON array containing all claim objects

## Format Specification
[
  {
    "claim": "The exact statement representing a factual claim",
    "context": "The original sentence or passage from the report containing this claim",
    "source": "https://example.com/source1"
  },
  {
    "claim": "Another factual statement without direct citation",
    "context": "The original sentence or passage from the report containing this claim",
    "source": ""
  }
]

## Guidelines for Claim Identification
1. A claim should be a complete, standalone factual statement
2. Maintain the original wording where possible, but remove unnecessary context
3. Extract all factual claims regardless of whether they have citation support
4. Only map citation markers (numbers, author names, etc.) to their corresponding URLs in
   the references section when the marker directly follows the claim statement
5. Exclude opinions, speculations, or methodological descriptions
6. Extract the context passage containing each claim for verification purposes
7. If multiple claims are associated with the same citation, extract them as separate entries

## Citation URL Mapping
- If URLs appear directly after claims, use those URLs directly
- Citation markers (e.g., a number or [number]) must directly follow the claim to be considered
  as supporting that claim
- If claims use citation markers that reference a bibliography or reference section, locate the corresponding
  URLs in that section
- If a claim has no directly following citation marker, use an empty string for source

\end{lstlisting}

\subsection{Atomic Finding Decomposition}

\begin{lstlisting}[caption=Atomic Fact Decomposition]
You are given a factual statement (a claim) from a technical report.
Break the claim down into independent, minimal atomic facts that can each be
verified in isolation. Keep each atomic fact short, declarative, and free of
conjunctions when possible. Avoid duplicating the same content in multiple ways
unless it clarifies distinct atomic facts (e.g., subject + membership vs subject + role).

Format:
- Return a JSON array of strings. Each string is one atomic fact.
- Do not include any extra commentary or Markdown. Only return the JSON array.

Examples:
Input: "He was an American composer, conductor, and musical director."
Output: [
  "He was an American.",
  "He was a composer.",
  "He was a conductor.",
  "He was a musical director."
]

Input: "She currently stars in the romantic comedy series, Love and Destiny, which premiered in 2019."
Output: [
  "She currently stars in Love and Destiny.",
  "Love and Destiny is a romantic comedy series.",
  "Love and Destiny premiered in 2019."
]

Input: "During his professional career, McCoy played for the Broncos, the San Diego Chargers, the Minnesota Vikings, and the Jacksonville Jaguars."
Output: [
  "McCoy played for the Broncos.",
  "McCoy played for the Broncos during his professional career.",
  "McCoy played for the San Diego Chargers.",
  "McCoy played for the San Diego Chargers during his professional career.",
  "McCoy played for the Minnesota Vikings.",
  "McCoy played for the Minnesota Vikings during his professional career.",
  "McCoy played for the Jacksonville Jaguars.",
  "McCoy played for the Jacksonville Jaguars during his professional career."
]

Input: "The EU approved the AI Act in 2024 and introduced new compliance requirements."
Output: [
  "The EU approved the AI Act in 2024.",
  "The AI Act introduced new compliance requirements."
]

Input: "The Amazon River is the largest by discharge and flows into the Atlantic Ocean."
Output: [
  "The Amazon River is the largest river by discharge.",
  "The Amazon River flows into the Atlantic Ocean."
]

Now decompose the following claim into atomic facts and return only a JSON array of strings:
\end{lstlisting}

\subsection{Answer Extraction}

\begin{lstlisting}[caption=Atomic Fact Decomposition]
## Task Description
Extract the answer to the research question from the provided report.

## Input
A Research Question and a complete report containing the answer.

## Output Requirements
Extract the direct answer to the research question
Provide supporting evidence/context from the report
Return a JSON object with the answer and supporting context

## Format Specification
{
  "question": "The research question",
  "answer": "The direct answer extracted from the report",
  "supporting_context": "Key passages from the report that support this answer"
}

## Guidelines
1. Focus on directly answering the research question
2. Be concise but comprehensive
3. Include relevant evidence and context
4. Maintain factual accuracy
\end{lstlisting}

\clearpage

\section{Additional Details}
\label{sec:additional}

\subsection{Instantiation 2: Open Deep Research}
\label{sec:instantiation2}

\textbf{The Outer Loop (Supervisor).}
The outer loop operates on the global research state. It operates over a sequence of discrete steps $t = 1,2, ..., T$. We define the components as follows:
\begin{itemize}[leftmargin=5mm,nosep,itemsep=2pt]
    \item \emph{Information Acquisition ($\pi_{\text{query}}^{\text{outer}}$):} The outer loop thinking strategy is parameterized by a backbone LLM. It generates an action $a_t \in \mathcal{S}^{\text{outer}} \cup \mathcal{T}^{\text{outer}}$, where $\mathcal{S}^{\text{outer}}$ is the space of research topics and $\mathcal{T}^{\text{outer}}$ is the termination signal.
    \item \emph{Search Engine ($\beta_{\text{engine}}^{\text{outer}}$):} If $a_t \in \mathcal{S}^{\text{outer}}$, the agent executes the inner loop to generate synthesized reports $i_t \sim \text{InnerLoop}(\cdot \mid a_t)$.
    \item \emph{Information Compression ($\pi_\text{sum}^\text{outer}$):} The outer loop does not conduct this step. Therefore, $h_t = \mathbb{1}(i_t)$, where $\mathbb{1}$ is the identity function.
    \item \emph{Inference ($\pi_{\text{update}}^{\text{outer}}$):} The outer belief is updated by appending the synthesized report returned by the inner-process: $\mathbf{b}_{t+1} = \text{Append}(\mathbf{b}_t, a_t, h_t)$.
\end{itemize}

\textbf{The Inner Loop (Researcher).}
The inner loop realizes the engine $\beta_{\text{engine}}^{\text{outer}}$ for the outer loop. It operates over a sequence of discrete steps $k = 1,2, ..., K$. We define the components as follows:
\begin{itemize}[leftmargin=5mm,nosep,itemsep=2pt]
    \item \emph{Information Acquisition($\pi_{\text{query}}^{\text{inner}}$):} A reactive LLM policy that generates an action $a_k \in \mathcal{S}^{\text{inner}} \cup \mathcal{T}^{\text{inner}}$, where $\mathcal{S}^{\text{inner}}$ represents search queries and $\mathcal{T}^{\text{inner}}$ is the local stop signal.
    \item \emph{Search Engine ($\beta_{\text{engine}}^{\text{inner}}$):} If $a_k \in \mathcal{S}^{\text{inner}}$, the search engine returns raw information $i_k \sim \beta_{\text{engine}}^{\text{inner}}(\cdot \mid a_k)$.
    \item \emph{Information Compression ($\pi_{\text{sum}}^{\text{inner}}$):} An information processing policy condenses raw information $i_k$ into a concise observation $h_k \sim \pi_{\text{sum}}^{\text{inner}}(\cdot \mid i_k)$.
    \item \emph{Inference ($\pi_{\text{update}}^{\text{inner}}$):} The inner belief is updated by appending the concise observation into the current belief state: $\mathbf{b}_{k+1} = \text{Append}(\mathbf{b}_k, a_k, h_k)$.
\end{itemize}

\subsection{Details about Evaluation Metrics}
\label{app: eval}

\subsubsection{Probabilistic Interpretation}

For the one-hot encoding vector $\mathbf{Y}$, let $\mathbf{y}_i$ denote the one-hot vector representation of the outcome $y^{(i)}$ from run $i$. Since $\mathbf{y}_i$ is a standard basis vector, it has unit norm ($\|\mathbf{y}_i\|_2 = 1$). Consequently, the squared Euclidean distance between two runs simplifies to a discrete metric:
\begin{equation}
    \|\mathbf{y}_i - \mathbf{y}_j\|^2 = 
    \begin{cases} 
        0 & \text{if } y^{(i)} \equiv y^{(j)} \\
        2 & \text{if } y^{(i)} \not\equiv y^{(j)}
    \end{cases}
\end{equation}
Substituting this into Eq.~\eqref{eq:tv_estimator}, the estimator reduces to the proportion of discordant pairs across all possible comparisons:
\begin{equation}
    \widehat{\text{TV}}(\mathbf{Y}) = \frac{1}{n(n-1)} \sum_{i \neq j} \mathbb{I}(y^{(i)} \not\equiv y^{(j)}).
\end{equation}
This value represents the empirical probability that two independently sampled runs will produce different answers.

For binary vectors $\mathbf{B},\mathbf{C}$, the squared distance between two normalized vectors relates directly to their Cosine similarity:
\begin{equation}
    \|\tilde{\mathbf{x}}_i - \tilde{\mathbf{x}}_j\|^2 = 2 - 2 (\tilde{\mathbf{x}}_i \cdot \tilde{\mathbf{x}}_j).
\end{equation}

Substituting the cosine identity $\|\tilde{\mathbf{x}}_i - \tilde{\mathbf{x}}_j\|^2 = 2 - 2 (\tilde{\mathbf{x}}_i \cdot \tilde{\mathbf{x}}_j)$ into Eq.~\eqref{eq:tv_estimator}, the estimator simplifies to the average complement of the cosine similarity:
\begin{equation}
    \widehat{\text{TV}} = 1 - \frac{1}{n(n-1)} \sum_{i \neq j} (\tilde{\mathbf{x}}_i \cdot \tilde{\mathbf{x}}_j).
\end{equation}

We use findings as an illustrative example. Let $S_i$ denote the set of finding indices discovered in run $i$. Consider two runs $i$ and $j$. Assuming the volume of findings is consistent across runs such that $|S_i| = |S_j| = k$, the dot product of the normalized vectors approximates the conditional probability of recurrence:
\begin{align}
    \tilde{\mathbf{x}}_i \cdot \tilde{\mathbf{x}}_j 
    &= \frac{|S_i \cap S_j|}{\sqrt{|S_i||S_j|}} \notag \\
    &= \frac{|S_i \cap S_j|}{k} \notag \\
    &= P(\text{finding} \in S_j \mid \text{finding} \in S_i).
\end{align}

Taking the expectation over all pairs, we arrive at the following probabilistic interpretation:
\begin{equation}
    \mathbb{E}[\widehat{\text{TV}}] = 1 - P(\text{finding} \in S_j \mid \text{finding} \in S_i).
\end{equation}

\subsubsection{Incorporating Semantic Geometry}
In our primary analysis, we treat answers and findings as categorical distributions — disagreeing on `cat' vs. `dog' incurs the same penalty as `cat' vs. `car'. If we consider the semantic geometry of answers (i.e., answer `cat' is closer to `dog' than to `car' in some embedding space) and
findings, our TV framework naturally extends to this setting by redefining the realization vectors using embedding distances.

For a semantic space equipped with a similarity metric $s(u, v) \in [0, 1]$, we can generalize our vector construction.
\begin{itemize}[leftmargin=5mm,nosep,itemsep=2pt]
    \item \emph{Answers:} Instead of a one-hot vector $\mathbf{e}_k$, the realization $\mathbf{y}_i$ becomes the dense embedding vector $\mathbf{v}_{y^{(i)}}$. The Euclidean distance $\|\mathbf{y}_i - \mathbf{y}_j\|^2$ then directly captures semantic divergence (e.g., small for `cat' vs. `dog', large for `cat' vs. `car').
    
    \item \emph{Findings/Citations:} If the global set of findings is $\{f_1, \dots, f_K\}$, the $k$-th entry of $\mathbf{b}_i$ can be defined as $\max_{f \in S_i} s(f_k, f)$.
    For example, suppose the global findings are $\{\text{cat}, \text{dog}, \text{car}\}$ with $s(\text{cat}, \text{dog})=0.8$.
    \begin{itemize}
        \item A run finding only ``cat" yields $\mathbf{b} = [1, 0.8, 0]$.
        \item A run finding ``dog" and ``car" yields $\mathbf{b} = [0.8, 1, 1]$.
    \end{itemize}
\end{itemize}
In this formulation, TV can be used to measure the variance of the agent's output in the semantic embedding space rather than the discrete symbolic space.

\subsubsection{Sensitivity to Relative Scale}
A potential concern with using $L_2$ normalization is that it discards the absolute magnitude of the vectors (i.e., the total number of findings). However, this property is desirable for measuring stability: it ensures that our metric focuses on the relative consistency of the information retrieved, rather than the raw volume.

To illustrate this, consider how the metric penalizes a ``single omission' error — where one run misses exactly one finding discovered by another run. Intuitively, missing 1 finding out of 10 is a more severe failure of consistency compared to missing 1 finding out of 100. Our normalized $\widehat{\text{TV}}$ captures this distinction. As an example, we compare two scenarios where Run 1 generates a set of findings $S_1$ and Run 2 generates $S_2$, with an intersection size of $|S_1 \cap S_2|$.

\begin{itemize}[leftmargin=5mm,nosep,itemsep=2pt]
    \item \emph{Case A:}
    The agent discovers 100 findings in Run 1, but misses one in Run 2 ($|S_1|=100, |S_2|=99, |S_1 \cap S_2|=99$).
    \[ 
    \widehat{\text{TV}}_A = 1 - \frac{99}{\sqrt{100 \times 99}} = 1 - 0.995 = \mathbf{0.005} 
    \]
    
    \item \emph{Case B:}
    The agent discovers 10 findings in Run 1, but misses one in Run 2 ($|S_1|=10, |S_2|=9, |S_1 \cap S_2|=9$).
    \[ 
    \widehat{\text{TV}}_B = 1 - \frac{9}{\sqrt{10 \times 9}} = 1 - 0.949 = \mathbf{0.051} 
    \]
\end{itemize}

Although the absolute disagreement is identical (1 finding difference) in both cases, the estimator assigns a penalty to Case B that is an order of magnitude larger ($\approx 10\times$). This confirms that $\widehat{\text{TV}}$ correctly scales.

\clearpage

\subsection{Mitigation in API Setting through Temperature Tuning}
\label{app:temp_tune}

\begin{table}[h]
\centering
\caption{Stochasticity and Accuracy Metrics across Sampling Temperatures ($\lambda$) under API Setting.}
\label{tab:temp_stochasticity_comparison}
\begin{tabular}{l|ccc}
\toprule
\textbf{Metric} & \textbf{$T=0$} & \textbf{$\lambda=0.5$} & \textbf{$\lambda=1.0$} \\
\midrule
$\widehat{\mathrm{TV}}(\mathbf{Y})$  & 0.700 & 0.670 & 0.620 \\
$\widehat{\mathrm{TV}}(\mathbf{B})$ & 0.828 & 0.836 & 0.834 \\
$\widehat{\mathrm{TV}}(\mathbf{C})$& 0.621 & 0.608 & 0.612 \\
\midrule
Accuracy            & 0.28  & 0.24  & 0.24  \\
\bottomrule
\end{tabular}
\end{table}

\subsection{Example for Structured Output}
\label{app:structured}

\begin{lstlisting}[caption=Structured Reasoning Output]
<reasoning>{\n  \"established_facts\": [],\n  \"open_questions\": [\n    \"Which countries were in the top 10 with the lowest GPI scores in both 2022 and 2023 according to Vision of Humanity?\",\n    \"Of those countries, which had a gun homicide rate below 0.20 per 100,000 population in both 2022 and 2023 according to World Population Review?\",\n    \"Are the GPI scores and gun homicide rate data available and comparable for the same set of countries across both years?\"\n  ],\n  \"next_search_directions\": [\n    \"Search for the Global Peace Index (GPI) 2022 and 2023 reports from Vision of Humanity to identify the top 10 most peaceful countries (lowest scores).\",\n    \"Cross-reference the list of countries that appear in both the 2022 and 2023 top 10 lowest GPI scores.\",\n    \"Search World Population Review's data for gun homicide rates in 2022 and 2023 for those countries.\",\n    \"Filter countries with gun homicide rates less than 0.20 per 100,000 in both years.\"\n  ],\n  \"contradictions_or_uncertainties\": [\n    \"Potential mismatch in country naming or classification between Vision of Humanity and World Population Review.\",\n    \"Possible discrepancies in reporting years or data collection methods between the two sources.\",\n    \"Uncertainty about whether World Population Review reports gun homicide rates for all countries listed in the GPI top 10.\"\n  ]\n}</reasoning>
\end{lstlisting}

\end{document}